\documentclass[10pt,journal,compsoc]{IEEEtran}
\usepackage[utf8]{inputenc}
\ifCLASSOPTIONcompsoc
  \usepackage[nocompress]{cite}
\else
  \usepackage{cite}
\fi

\ifCLASSINFOpdf
   \usepackage[pdftex]{graphicx}
\else
   \usepackage[dvips]{graphicx}
\fi

\usepackage{amsmath}

\usepackage{algorithmic}
\usepackage{array}

\ifCLASSOPTIONcompsoc
 \usepackage[caption=false,font=footnotesize,labelfont=sf,textfont=sf]{subfig}
\else
 \usepackage[caption=false,font=footnotesize]{subfig}
\fi

\usepackage{stfloats}

\ifCLASSOPTIONcaptionsoff
 \usepackage[nomarkers]{endfloat}
\let\MYoriglatexcaption\caption
\renewcommand{\caption}[2][\relax]{\MYoriglatexcaption[#2]{#2}}
\fi

\usepackage{url}

\hyphenation{op-tical net-works semi-conduc-tor}

\usepackage{color}
\usepackage{array}
\usepackage{lipsum}
\usepackage{float}
\usepackage{booktabs}
\usepackage{multirow}
\usepackage{amsmath}
\usepackage{amsthm}
\usepackage{amssymb}
\usepackage{graphicx}
\usepackage{mathtools}
\usepackage[breaklinks=true,bookmarks=false]{hyperref}
\usepackage{hyperref}
\hypersetup{
    colorlinks=true,
    linkcolor=black,     
    urlcolor=black,
}
\usepackage{bbm}
\usepackage{algorithm}
\usepackage{algorithmic}
\usepackage{multirow}
\usepackage{xcolor}
\usepackage{pifont}

\newtheorem{theorem}{Theorem}[section]

\newtheorem{remark}[theorem]{Remark}



\newcommand{\cmark}{\ding{51}}%
\newcommand{\xmark}{\ding{55}}%
\newlength\savewidth\newcommand\shline{\noalign{\global\savewidth\arrayrulewidth
  \global\arrayrulewidth 1pt}\hline\noalign{\global\arrayrulewidth\savewidth}}

\title{PatchMix Augmentation to Identify Causal Features in Few-shot Learning }
\author{Chengming Xu$^{*}$, Chen Liu$^{*}$, Xinwei Sun, Siqian Yang, Yabiao Wang, Chengjie Wang, Yanwei Fu

\IEEEcompsocitemizethanks{
\IEEEcompsocthanksitem $^{*}$ indicates equal contributions. Xinwei Sun and Yanwei Fu are the corresponding authors.  This work was supported in part by the National Natural Science Foundation of China Grant (62076067), Shanghai Municipal Science and Technology Major  Project (2021SHZDZX0103), and ZJ Lab.
\IEEEcompsocthanksitem Chengming Xu, Xinwei Sun and Yanwei Fu are with the School of Data Science and MOE Frontiers Center for Brain Science, Fudan University. Yanwei Fu is also with Fudan ISTBI—ZJNU Algorithm Centre for Brain-inspired Intelligence, Zhejiang Normal University, Jinhua, China.  E-mail: \{cmxu18, sunxinwei, yanweifu\}@fudan.edu.cn
\IEEEcompsocthanksitem Chen Liu is with the Department of Mathematics at the Hong Kong University of Science and Technology. E-mail: cliudh@connect.ust.hk
\IEEEcompsocthanksitem Siqian Yang, Yabiao Wang and Chengjie Wang are with Youtu Lab, Tencent. Chengjie Wang is also with Shanghai Jiao Tong Universtity. E-mail: \{seasonsyang, caseywang, jasoncjwang\}@tencent.com
}
}

\begin{document}

\IEEEtitleabstractindextext{
\begin{abstract}

The task of Few-shot learning (FSL) aims to transfer the knowledge learned from base categories with {sufficient} 
labelled data to novel categories {with scarce known information}. It is currently an important research question and has great practical values in the real-world applications. Despite extensive previous efforts  are made on few-shot learning tasks, we emphasize that most existing methods did not take into account the distributional shift caused by sample selection bias in the FSL scenario.
Such a selection bias can induce spurious correlation between the semantic \emph{causal} features, that are causally and semantically related to the class label, and the other  \emph{non-causal} features. Critically, the former ones should be invariant across changes in distributions, highly related to the classes of interest, and thus well generalizable to novel classes,  while 
the latter ones are not stable to changes in the distribution. 
To resolve this problem, we propose a novel data augmentation strategy dubbed as \emph{PatchMix} that can break this spurious dependency by replacing the patch-level information and supervision of the query images with random gallery images from different classes from the query ones. We theoretically show that such an augmentation mechanism, different from existing ones, is able to identify the causal features. To further make these features to be discriminative enough for classification, we propose \emph{Correlation-guided Reconstruction} (CGR) and \emph{Hardness-Aware} module for instance discrimination and easier discrimination between similar classes. Moreover, such a framework can be adapted to the unsupervised FSL scenario. The utility of our method is demonstrated on the state-of-the-art results consistently achieved on several benchmarks including \textit{mini}ImageNet, \textit{tiered}ImageNet, CIFAR-FS , CUB, Cars, Places and Plantae, in all settings of single-domain, cross-domain and unsupervised FSL. By studying the intra-variance property of learned features and visualizing the learned features, we further quantitatively and qualitatively show that such a promising result is due to the effectiveness in learning causal features.

\end{abstract}

\begin{IEEEkeywords}
Few-Shot Learning, spurious correlation, causal features, intra-variance regularization
\end{IEEEkeywords}}

\maketitle

\section{Introduction \label{sec:intro}}
Among many factors to the successful deep learning applications of modern society, it is necessary and decisive to collect a large amount of labeled training data. Typically, prevailing computer vision models such as ResNet~\cite{he2016deep} and Faster R-CNN~\cite{ren2015faster} are  trained by millions of labeled examples and thus achieve decent generalization ability.
Unfortunately, for some cases such as the rare species, it is not feasible to collect large amount of labeled and diverse data which can be used for training.

Motivated by humans' ability of \textit{learning to learn} new objects/concepts with few references, the task of Few-Shot Learning (FSL) is recently studied in the computer vision and machine learning communities. Generally, the FSL aims at learning a model, which can generalize to  \textit{novel/target} dataset with  few labelled data (\emph{i.e.}, support sample) available, on \textit{base/source} dataset of vast annotated data. 

The existing FSL methods \cite{snell2017prototypical, sung2018learning, finn2017model} exploit the knowledge transferring from the base to novel categories via meta-learning paradigm. 
One of the most commonly used supervision signals by these meta-learning methods is the image class labels from base dataset~\cite{hou2019cross}. 
This is the same as many-shot learning. And particularly in classical many-shot learning, the guidance from class labels is common and effective under the independent and identically distributed (\emph{i.i.d}) assumption.
Unfortunately, the FSL is suffered from the problem of \textit{distribution shift}~(Chap.20 in~\cite{probablistic2022kevin}); and the testing distribution of novel target classes is quite different from the training distribution of those source/base classes.
This is caused by the sample selection bias existed in data collection and support/query set splitting. 

\begin{figure}
    \centering
    \includegraphics[scale=0.8]{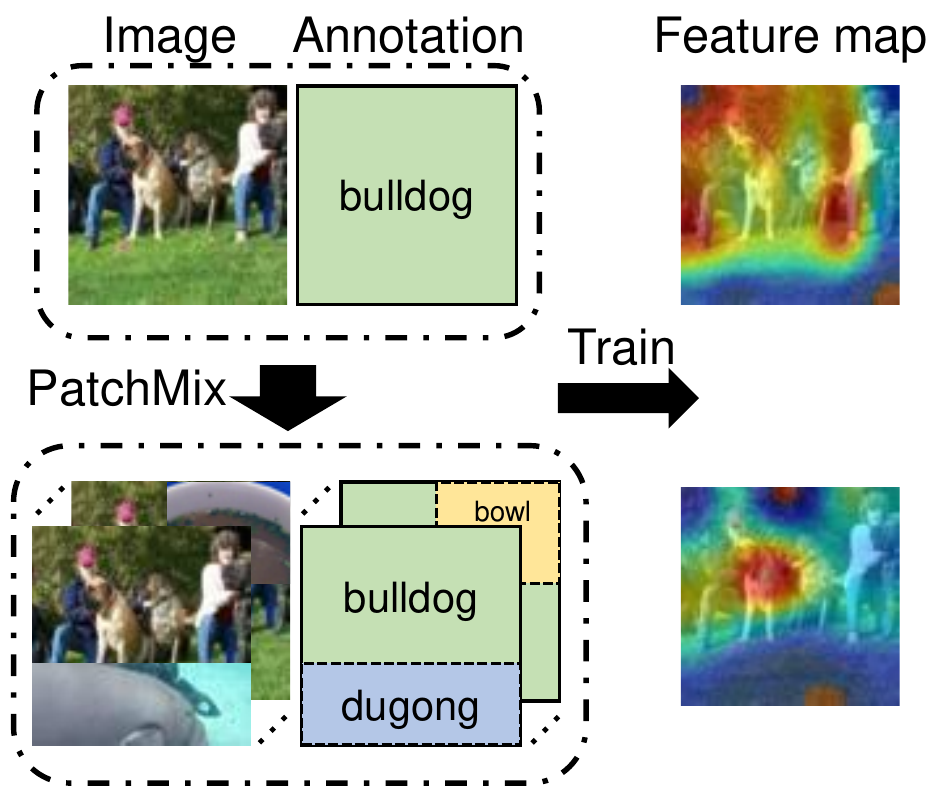}
    \caption{The existing FSL methods typically train the models directly with image-level annotations. In this way both the causal (\emph{e.g.} the dogs) and non-causal features (\emph{e.g.} the people, trees and grass) are learned to build up correlation with the class label, which can lose prediction ability and be less effective on novel data due to the domain gap. Compared with those methods, our proposed PatchMix can help the model learn disentangled causal features by breaking up the dependency between causal and non-causal features, thus making better generalization to novel categories.    \label{fig:teaser}}
 
\end{figure}

Such a selection bias can induce spurious correlation between causal and non-causal features of the classes of interest. These two kinds of features causally and not causally influence the class label respectively. For example, if the label is "dog", its texture, shape are causal features while those features that are related to the "people" are non-causal features. 
Critically, the causal features should be reliably predictive of the classes of interest,  invariant across changes in distributions, and thus  well generalizable to novel classes. As deep models  are typically optimized to well fit the training data, such "shortcut non-causal features'' would be easier to be learned by inheriting the   spurious correlation~\cite{xiao2021iclr,shah2020nips}. While the non-causal features may be still good to the supervised learning on the i.i.d data~\cite{khani2021removing}, it may hurt the performance of generalizing to novel data, which has different distribution from the base data.

We give a detailed example in Fig.~\ref{fig:teaser}. As shown in the feature map, the model has learned to highly associate the grass and persons with the dogs to help recognize the `dog' class. This makes sense, as  obviously  people would like to play with dog on the grass quite often. This leads to the biased combination of these elements in the sampled training data. Nevertheless, the learned non-causal features, \emph{i.e.}, the grass and the people, are no longer correlated with other categories like horse, lion anymore. Compared with IFSL~\cite{yue2020interventional} which also considers spurious correlation but only partly studies the bias inherited from the pre-trained backbones, our paper will understand and address this challenge in the general distribution shift in FSL. Even when there is no pre-training stage, the spurious correlation can still commonly exists via such a shift. 

To this end, we propose enforcing a novel data-augmentation mechanism dubbed as PatchMix that are effective in learning causal features and hence better generalization to novel categories. Specifically, our PatchMix replaces some patches from each query image with class label $A$ with a set of gallery images from a random different class $B$. Meanwhile, the replaced patches are labelled as $B$ (\emph{i.e.}, gallery category), rather than the query category, \emph{i.e.}, $A$. In this regard, the spurious correlation between causal and non-causal factors that often come up from different patches (\emph{e.g.}, the dog and the people) is largely weaken, endowing the model with the ability to disentangle the causal features from others. In particular, we provide a theoretical analysis from \emph{Structural Causal Model} (SCM). We show that our PatchMix operation has the ability of  reducing correlation between spurious/non-causal and causal features; thus it is also able to defunction the non-causal features during testing on novel classes. Such a disentanglement can make our PatchMix prominent among other data augmentation methods like CutMix~\cite{yun2019cutmix} (see Sec.~\ref{sec:anaylysis} for detailed analysis). After dropping the non-causal features, our learned features can correlate better with class labels, and thus have smaller variances among instances from the same novel category. As the result, models trained with PatchMix can enjoy an 
easy classification between different classes. 

To make our causal features more discriminative for classification, we further propose two modules to enhance the PatchMix, namely \emph{hardness-aware} and \emph{Correlation-Guided Reconstruction} (CGR) modules. Specifically, in CGR, original query image are reconstructed by selecting informative patches from both query and gallery image based on similarity between each patch and the query patches. This module can help our model to tell apart features from different images, thus fulfilling instance discrimination equipped with better learned features. To further learn discriminative features especially between similar classes, we in the \emph{hardness-aware} module propose selecting the query and gallery images from two classes that are globally most similar to each other. Specifically, we formulate this problem into a Travelling Salesman Problem (TSP) on the distance graph, in which
categories and negative similarity among them are formulated as nodes and edges of the graph. Equipped with such a mixture strategy, we can control the hardness of training episodes, thus leading to more robustness and better representations.

Moreover, based on CACTUs\cite{hsu2018unsupervised}, our PatchMix can be well adapted to unsupervised learning FSL (\emph{i.e.}, only unlabeled base data is provided), which is meaningful when the labeling cost is high. Concretely, the unsupervised pre-training stage is replaced by our novel PatchMoCo which constrains the patch-level contrast among different images. Then we involve our PatchMix into the pseudo-label training stage.

In order to validate the efficacy of our method, we conduct experiments on single-domain, cross-domain and unsupervised FSL with a wide range of benchmark datasets including \textit{mini}ImageNet, \textit{tiered}ImageNet, CIFAR-FS and CUB. Extensive results show that PatchMix can achieve the state-of-the-art performance on all settings comparing with previous methods, implying the generalization ability. Besides, we show that such an improvement can be contributed to the ability of learning causal features, indicated by the concentration of class of interest in visualized feature map and smaller intra-variance of learned features among instances from the same category (\emph{a.k.a}, neural collapse in \cite{galanti2021role}).

The contributions of this paper can be listed as follows: (1) We propose a novel data augmentation method tailored for few-shot learning dubbed PatchMix, that can remove the spurious correlation, and identify the causal features. Critically,  we leverage the well-established causal framework--\emph{Structural Causal Model} to help explain and understand our PatchMix. To the best of our knowledge, it is the first work that presents a unified causal learning framework for data augmentation to identify causal features in the few-shot learning tasks. (2) We enhance our PatchMix model with the newly proposed \emph{correlation-guided reconstruction} and \emph{hardness-aware} modules. These new modules can facilitate  better feature learning. (3) We propose a novel unsupervised FSL method including a new unsupervised pre-training strategy named PatchMoCo, which is built upon the PatchMix, and a pseudo-label training stage.
    (4) We conduct a vast amount of experiments on different settings and various datasets. The results verify the effective of our proposed method.

\noindent\textbf{Extensions.} This paper is an extension of \cite{liu2021learning}. We have added the following aspects based on the conference version: (1) We provide a way to explain how PatchMix can help few-shot learning in the perspective of learning disentangled causal features by providing a theoretical interpretation. 
    (2) We further enhance our PatchMix with two novel modules, \emph{i.e}. correlation-guided reconstruction and hardness-aware PatchMix. These two modules further improve the efficacy of our PatchMix model.
    (3) We empirically show that the proposed CGR and hardness-aware PatchMix can significantly improve the model. Meanwhile, our proposed method can receive state-of-the-art results on several settings. The codes\&models will be released on the project pages.

\section{Related Work}
\noindent\textbf{Few-shot recognition.} Few-shot learning (FSL)  
aims to recognize the target classes by adapting the prior ‘knowledge’ learned from {base categories}. Such knowledge usually resides in a deep embedding model for the general-purpose matching of the support and query image pairs. The embedding is normally learned with enough training instances on {base categories} and updated by a few training instances on {novel categories}. Recent efforts for FSL are made on optimization, metric-learning  and augmentation.

Optimization based methods ~\cite{ravi2016optimization, finn2017model, nichol2018first, li2017meta, sun2019meta, rusu2018meta, li2019learning, peng2019few, xing2019adaptive} learn on the base dataset a good initialization that can be quickly adapted to novel dataset. Metric-learning based methods~\cite{snell2017prototypical, hou2019cross, ye2020feat, zhang2020deepemd, sung2018learning, hu2020leveraging, snell2020bayesian, fei2020melr, zhang2020iept, wertheimer2021few, oh2020boil, su2020does, wu2020attentive, dhillon2019baseline, afrasiyabi2020associative, li2020adversarial, zhang2019variational} learn a good embedding and an appropriate comparison metric. Specifically, these methods provide a simple way to recognize novel data by calculating distance metrics between the image features and prototypes of each class. Augmentation based methods~\cite{wang2018low, schwartz2018delta, kim2020model, yang2021free} directly {alleviate} the dataset gap problems in few-shot learning by the way of producing various kinds of data. Typically, these augmentation methods utilize the generative model to create new training samples based on available ones and random noise or using the augmented data in the testing phase, and trying to learn a more robust new classifier for each task. Essentially, our PatchMix can still be categorized as an augmentation method. However, our method is different from those, as we are the \emph{first} to leverage augmentation in learning causal features by removing spurious correlation caused by sample selection bias, in the few-shot learning scenario. As we will show later, such a disentanglement of causal features can decrease the variance of inner-class instances' features, leading to easier classification among different classes.

Note that some of the former FSL methods like MTL~\cite{sun2019meta} utilize the strategy of hard sample mining, by taking the classes with low accuracy as the hard class. In contrast, our hardness-aware PatchMix encourages to 
mix each class with the most similar class, such that the generated images can have more distracting information to help improve the model robustness. Besides, by generating images from similar classes, the learned causal features can be more discriminative for classifying between these classes that are too similar (in terms of their causal features, \emph{e.g.}, to classify the dog and the wolf, although the learned causal features can concentrate on their shape, they are similar to each other) to be classified. 

\noindent\textbf{Data Augmentation.}  In the vanilla supervised learning, data augmentation is a widely used technique to facilitate training the deep models. 
The na\"ive augmentation strategies such as random flip and random crop serves as a data regularization, and have been applied to many practical topics and learning tasks~\cite{he2016deep,huang2017densely}.
Recently many works~\cite{zhang2017mixup, yun2019cutmix, michaelis2019benchmarking, gong2021person, chen2020gridmask, cubuk2020randaugment, ghiasi2021simple} are proposed by { mixing or substituting content between images}. Specifically, Mixup~\cite{zhang2017mixup} {mixes} two alternative images and the corresponding label. CutMix~\cite{yun2019cutmix} proposes to directly replace a randomly {selected} area. 
{Empirically, their efficacy has been evaluated on both fully-supervised and semi-supervised classification.}
In FSL, some works \cite{mangla2020charting} {attempt} to utilize manifold Mixup~\cite{verma2018manifold} to enhance the pretraining process. However, these works only directly apply existing data augmentation methods to FSL. As we will discussion in the following context, our PatchMix, instead of data regularization, is exclusively effective on removing spurious correlation for FSL, which cannot be realized by other data augmentation methods, as shown in the comparison between CutMix and our PatchMix in Sec.~\ref{sec:anaylysis} in terms of the identification of causal features. 

\noindent\textbf{Causal Learning.} Due to the invariance property of causal relation, there is an increasing attention paid at the intersection between causal inference and machine learning, see \cite{arjovsky2019invariant, scholkopf2021toward, sun2021recovering, liu2021learning, peters2016causal,rothenhausler2019causal} for out-of-domain generalization by removing the confounding bias. These works target on learning causal semantic features for better generalization, in the framework of \emph{Structural Causal Model} (SCM) pioneered by Judea Pearl \cite{pearl2009causality, pearl2000models}. Although it is common for FSL to have distributional shifts between training and test data due to sampling bias 
in data collection, few attempts have been made to address this issue. The work that is most related to us is IFSL \cite{yue2020interventional} that proposed to remove the bias from pre-trained knowledge by considering an intervened predictor, in the scenario when the pre-training step is adopted. Our departure here, is leveraging SCM to explicitly model the spurious correlation between causal and non-causal features, which can happen due to sample selection bias even  without the pre-training stage. With such a modeling, we theoretically show that our PatchMix is guaranteed to identify only the causal semantic features during learning. The empirical comparison of ours with IFSL are in Sec.~\ref{sec.ablation}. 

\section{Methodology}

\noindent \textbf{Problem Formulation.}
We formulate few-shot learning in the meta-learning paradigm. Particularly, the FSL model is learned via the episodes. {\color{black}{The episode should imitate the few-shot learning task: few support and query instances are sampled from several categories to train/evaluate the embedding model; the sampled support set is fed to the learner to produce a classifier, and then the loss and accuracy computed on the sampled query set is used respectively in training and testing phase.}} In general,
we have two sets of data, namely meta-train set $\mathcal{D}_s=\left\{ \left(\mathbf{I}_{i},y_{i}\right),y_{i}\in\mathcal{C}_{s}\right\}$ and meta-test set $\mathcal{D}_t=\left\{ \left(\mathbf{I}_{i},y_{i}\right),y_{i}\in\mathcal{C}_{t}\right\}$  corresponding to the base and novel dataset, individually. $\mathcal{C}_s$ and $\mathcal{C}_t$  ($\mathcal{C}_s \cap \mathcal{C}_t = \emptyset $) represent base and novel category sets respectively. The goal of FSL is to train a model on $\mathcal{D}_s$ which is well generalized to $\mathcal{D}_t$. As the definition of FSL task, the model can learn from few (\emph{e.g.}, one or five) labelled data from each category of $\mathcal{C}_t$.

We follow the former methods \cite{snell2017prototypical,sung2018learning} to adopt an $N$-way $K$-shot meta-learning strategy. Here $N$ denotes the number of categories in one episode and $K$ stands for {the} number of samples for each category in support set. 
Specifically, for each episode $\mathcal{T}$, $N$ categories are randomly sampled from $\mathcal{C}_s$ for training and $\mathcal{C}_t$ for testing, $K$ instances each for these selected categories to construct a support set $\mathcal{S}=\left\{ \left(\mathbf{I}_{i}^{\mathrm{supp}},y_{i}^{\mathrm{supp}}\right)\right\}$. 
Similarly we sample $M$ query samples per category, and thus construct the query set $\mathcal{Q}=\left\{ \left(\mathbf{I}_{i}^{\mathrm{q}},y_{i}^{\mathrm{q}}\right)\right\}$, and $\mathcal{S} \cap \mathcal{Q} = \emptyset$. Then the episode can be represented as $\mathcal{T}=\left\{\mathcal{S}, \mathcal{Q}\right\}$. In total each episode has $NK$ support images and $NM$ query images. {\color{black}{Note that while some methods, \emph{e.g.} \cite{snell2017prototypical} take different shot number during training and testing, we keep $K$ the same when training and evaluating our model.}}

This section is organized as follows. We first introduce a base model called DProto in Sec.~\ref{sec:baseline} to help define our PatchMix. Then we introduce the whole pipeline of our model which is overviewed in Fig.~\ref{fig:framework}. In particular, two stages of training are involved. For each stage, query image patches from training episodes are exchanged via our proposed PatchMix (Sec.~\ref{sec:patchmix}) to identify causal features (Sec.~\ref{sec:anaylysis}). The features of mixed images are then used in few-shot classification and reconstruction using the correlation-guided reconstruction (CGR) module (Sec.~\ref{sec:cgr}). In the second stage, each training episode is further enhanced with hardness-aware PatchMix (Sec.~\ref{sec:hard}), in which images from similar categories are mixed to induce more hardness for learning more discriminative features.

\subsection{A Base Model by Prototypes {\label{sec:baseline}}}
We introduce a simple model derived from ProtoNet~\cite{snell2017prototypical}. Specifically, given a few-shot episode $\mathcal{T}$, we first utilize a feature extractor network $\phi$ to {obtain} the feature maps of all images in the episode as $X=\phi(\mathbf{I}), \mathbf{I}\in\mathcal{T}$, where $X\in\mathbb{R}^{c\times h\times w}$. Then the prototype of the $i$-th class is calculated by taking the spatial-wise average of all support features belonging to this category, {followed by the} sample-wise average:
\begin{equation}
p_i=\frac{1}{K}\sum_{j=1}^{NK}\bar{X}_j^{\mathrm{supp}}\cdot \mathbbm{1}(y_j^{\mathrm{supp}}=i), i=1,\cdots, N
\label{eq:proto}
\end{equation}
\noindent where $\bar{X}_j^{\mathrm{supp}} \in \mathbb{R}^{c}$ is the spatial mean of $X_j^{\mathrm{supp}}$. For each query feature map $X^{\mathrm{q}}$, we build its prediction confidence map $\hat{X}^{\mathrm{q}}\in\mathbb{R}^{N\times h\times w}$ in the following way. For $i$-th class and spatial position indexed by $s,t$, $\hat{X}^{\mathrm{q}}_{i,s,t} = \frac{<X^{\mathrm{q}}_{:,s,t}, p_i>}{\|X^{\mathrm{q}}_{:,s,t}\|\|p_i\|}$, which is the normalized cosine distance between the prototype of $i$-th class and the query feature vector in this position. Meanwhile, a global classifier $f_{gc}$ consisting of a 1D convolutional layer is applied to $X^{\mathrm{q}}$ to get a prediction map.

This baseline model is a ProtoNet modified in two aspects: (1) We guide the model with patch-level labels instead of image-level ones to bring the stronger supervision. (2) We follow the former works \cite{hou2019cross} to add a global classifier, {\color{black}{which is used in training phase to enhance the supervision.}} Note that some existing works propose other kinds of modifications on ProtoNet such as learnable normalization \cite{oreshkin2018tadam}, attention modules \cite{ye2020feat} and more sophisticated distance metrics \cite{zhang2020deepemd}. We do not use these terms so that our model is simple enough to highlight the effect of our proposed PatchMix. We refer to this baseline model as DProto in the rest of this paper.

\begin{figure*}
    \centering
    \includegraphics[scale=0.43]{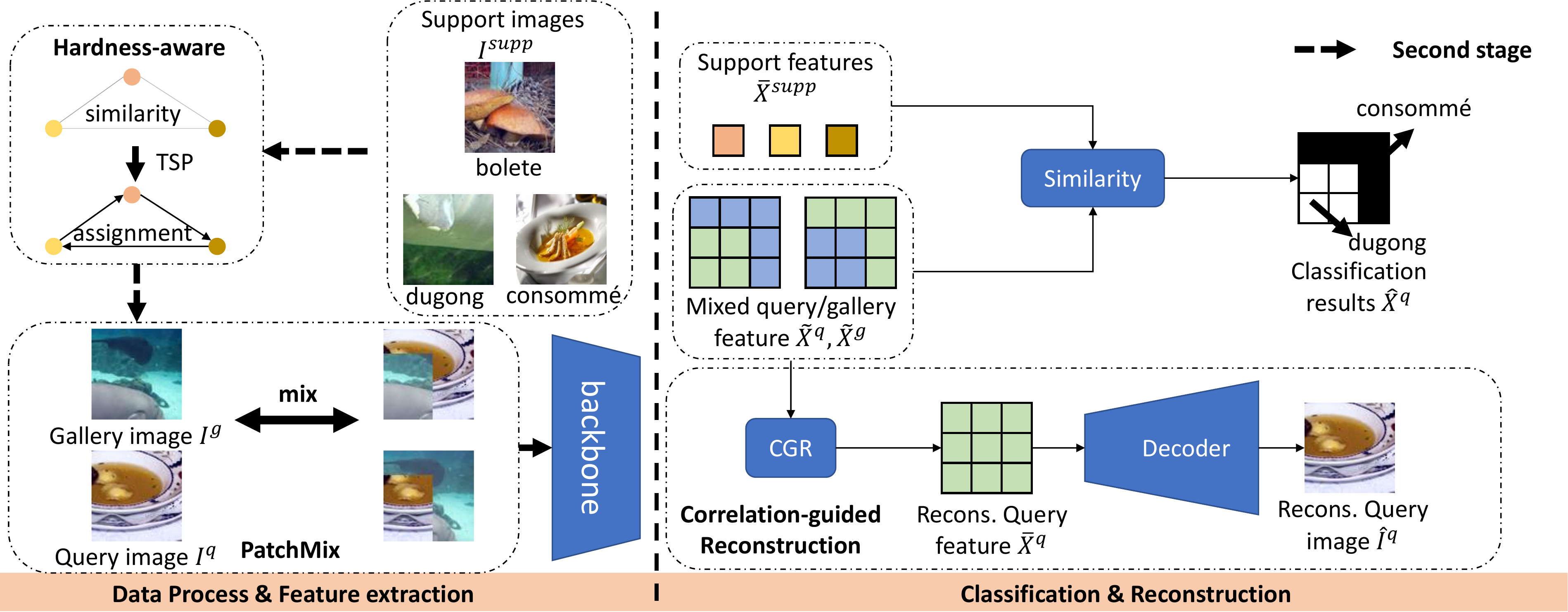}
    \caption{The training framework of our proposed model on a 5-way 1-shot task. In the first stage of training, for each query image we sample a gallery image whose random patch is then inserted into the corresponding position of query image. After feature extraction the classification result is achieved by comparing the feature vector of each position of query feature map to the averaged support feature vector, from which the classification loss is calculated together with the mixed label map. Meanwhile the mixed query and gallery feature map are processed with the correlation-guided module to reorganize them into one feature map that can restore the original query image. In the second stage, the PatchMix is further enhance with the hardness-aware module which controls the difficulty of mixture based on the distance among classes.}
    \label{fig:framework}
\end{figure*}

\noindent\textbf{Training.} For each input few-shot pair with inputs and outputs $\{X^{q}, y^{q}, \hat{X}^{q}\}$ ($y_q$ is a one-hot encoded vector for $|\mathcal{C}_s|$ classes), the objective function can be written as follow,
\begin{align}
    \mathcal{L}&=\ell_{f}+\frac{1}{2} \ell_{g}  \label{eq:loss}\\
     \ell_{g} &= -(\mathrm{log} \textrm{softmax}(f_{gc}(X^{\mathrm{q}})))^Ty^{q} \\
    \ell_{f} &= \frac{1}{hw}\sum_{s,t}\mathrm{log}\frac{e^{-\hat{X}^{\mathrm{q}}_{y^{\mathrm{q}}, s,t}}}{\sum_{i=1}^Ne^{-\hat{X}^{\mathrm{q}}_{i, s,t}}}
\end{align}
where $\ell_{f}$ is for the few-shot $N$-way classification, and $\ell_{g}$ is for the global many-shot $|\mathcal{C}_s|$-way (\textit{e.g.}, 64 for \textit{mini}ImageNet) classification. The feature extractor network $\phi$ is optimized based on $\mathcal{L}$ across randomly sampled episodes from base data.

\noindent\textbf{Testing.} {\color{black}{In the testing phase, only the feature extractor $\phi$ is used and the global classifier $f_{gc}$ is not engaged in testing.}} For each query image in an testing episode, we first get the confidence map $\hat{X}^{q}$ following the above method, then the image-level prediction is achieved by spatially averaging the confidence map.

\subsection{Image Augmentation by PatchMix \label{sec:patchmix}}
In this part we formally present PatchMix as an augmentation method tailored for few-shot learning. Concretely, for each query image $\{\mathbf{I}^{\mathrm{q}}, y^{\mathrm{q}}\}$, we randomly sample another query image as the gallery image $\{\mathbf{I}^g, y^g\}$ from which we collect the information to {switch}. Next we follow \cite{yun2019cutmix} to randomly select a box with width and height $(\hat{w},\hat{h})$ sampled from 
\begin{align}
    \lambda &\sim \mathrm{Unif}(0, 1) \\
    \hat{w} &= W\sqrt{1-\lambda}, \hat{h} = H\sqrt{1-\lambda},
\end{align}
\noindent where $W$, and $H$ are the width and height of the images. The center coordinate $(c_w,c_h)$ sampled from $c_w \sim \mathrm{Unif}(\lceil{\hat{w}/2}\rceil, W-\lceil{\hat{w}/2}\rceil)$, $c_h \sim \mathrm{Unif}(\lceil{\hat{h}/2}\rceil, H-\lceil{\hat{h}/2}\rceil)$. Denote $w_1,w_2,h_1,h_2$ as the left, right, lower and upper boundary of the box $(c_w,c_h,\hat{w},\hat{h})$. Then we can generate the 

the mask $M$ and the mixed image $\tilde{\mathbf{I}}^{\mathrm{q}}$ as:
\begin{align}
\label{eq:patchmix}
    M_{i,j} &= \left\{\begin{array}{rcl}
1 & & {w_1 \leq i \leq w_2, h_1 \leq j \leq h_2}\\
0 & & {o.w.}\\
\end{array} \right. \\
    \tilde{\mathbf{I}}^{\mathrm{q}} &= M \odot \mathbf{I}^g + (1-M) \odot \mathbf{I}^{\mathrm{q}}
\end{align}

In comparison with CutMix~\cite{yun2019cutmix} which adopts an image-level soft label as the ground truth, we still keep the patch-level hard labels to be the supervision information. We first interpolate the selected box to size of $w\times h$ as 
\begin{align}
    w_1^{'}, w_2^{'} &= \frac{w}{W}w_1, \frac{w}{W}w_2 \\
    h_1^{'}, h_2^{'} &= \frac{h}{H}h_1, \frac{h}{H}h_2
\end{align}
Then the new label map is set as
\begin{equation}
    Y_{i,j} = \left\{\begin{array}{rcl}
y^g & & {w_1^{'} \leq i \leq w_2^{'}, h_1^{'} \leq j \leq h_2^{'}}\\
y_q & & {o.w.}\\
\end{array} \right.
\end{equation}
Finally the mixed image $\tilde{\mathbf{I}}^{\mathrm{q}}$ is fed into the network, classified {by method} mentioned in Sec.~\ref{sec:baseline} and guided by the label map $Y$. In the subsequent section, we will introduce why such a simple operation can help learn causal features for better generalization. 

\subsection{Causal Explanation of PatchMix {\label{sec:anaylysis}}}

In this section, we explain the effectiveness of PatchMix in removing the spurious correlation from the learned representation, in the framework of \emph{Structural Causal Model} (SCM) \cite{pearl2009causality}. To describe the causal relations over all the variables, the SCM incorporates a directed acyclic graph (DAG) $G=(\mathcal{V},\mathcal{E})$, such that $X \to Y \in \mathcal{E}$ means $X$ has a direct causal effect to $Y$. Due to its ability to encode priories  beyond data, it has been increasingly leveraged to disentangle the \emph{causal semantic features} \cite{sun2021recovering, liu2021learning} from other features for out-of-distribution generalization. Inspired by such a spirit, we provide an ad-hoc analysis to explain how the PatchMix operator can remove the spurious correlation and only keep causal features for prediction.

\begin{figure}[ht!]
    \centering
    \includegraphics[scale=0.45]{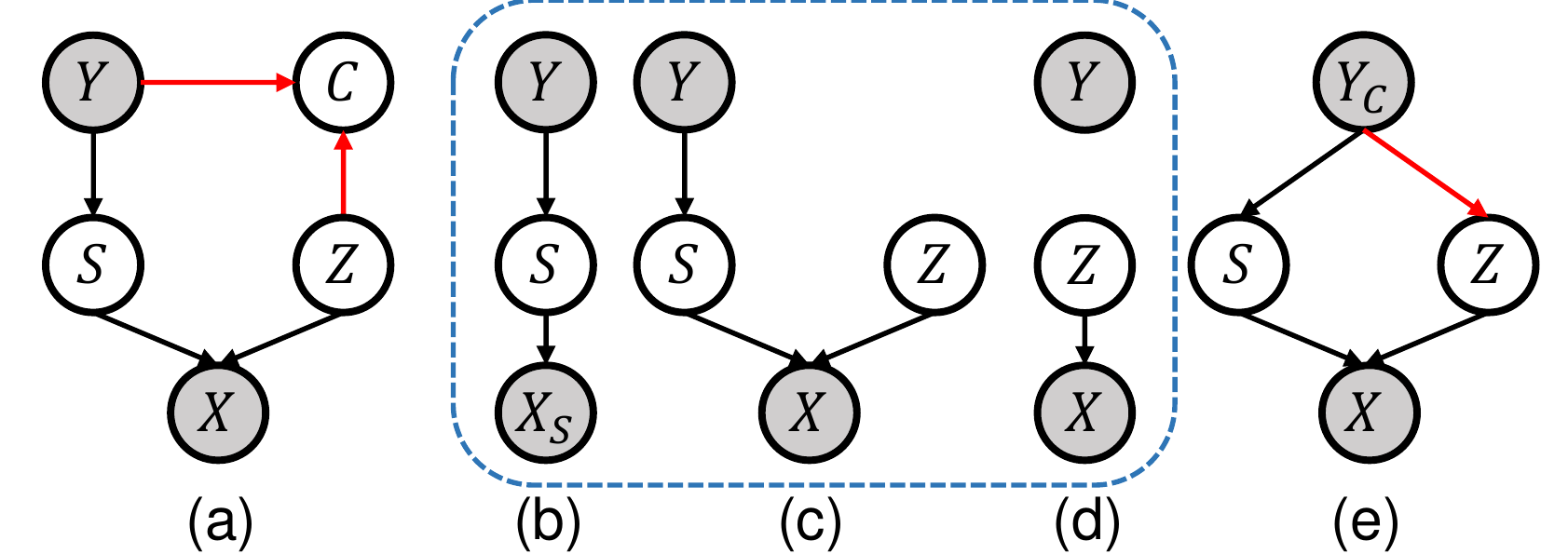}
    \caption{Causal graph of models (a) with sample selection bias, (b) using PatchMix for patches consisting of single image, (c) using PatchMix for patches consisting of two images, (d) using Patchmix for patches whose causal features are completely removed and (e) using CutMix. The $S, Z$ respectively denote the causal and non-causal features. $Y$ indicates the class label. $C$ is an indicator for train/test set. Red arrows mean the resource of spurious correlation between class label and non-causal features.
    The patch image with PatchMix can generated either from single image or two images, as respectively marked by the dot blue boxes. For both types, the PatchMix can remove the dependency from sampling bias. The CutMix replaces the original label $Y$ with an interpolated one $Y_c$; hence it is determined by features from both images.}
    \label{fig:causal-graph}
\end{figure}

The data collection procedure can often incur sampling bias, inducing the dependency among some originally independent concepts, \emph{e.g.}, the dog is associated with the grass more often than with the river in the collected dataset. This can cause the \emph{spurious} correlation between causal features and non-causal ones, leading to non-causal features learned during training, which can hurt the performance. 

We formulate this spurious correlation in Fig.~\ref{fig:causal-graph} (a), in which $S, Z$ respectively denote the causal (\emph{i.e.}, the texture, shape of the dog) and non-causal features (\emph{i.e.}, features related to other objects such as people) in terms of its relation with the outcome $Y$. The label $Y$ and non-causal features $Z$ altogether generates the sampling procedure denoted as $C$, \emph{e.g.}, the sampler collected the dog (\emph{i.e.}, outcome $Y$) on the grass (\emph{i.e.}, $Z$) in some communities when people walked with their pets. Here, the $C$ with $C=1$ (or $C=0$) means that the corresponding sample is in the training (or test) set. As $C$ is the collider on the path $S \gets Y \to C \gets Z$, it induces the correlation between $S$ and $Z$, \emph{i.e.}, $S \not\perp Z|C$. Moreover, for any sample $I$, the $Y \not \perp Z|S,I,C=1$, making the non-causal features learned during training process. However, it may fail to generalize as $P(S,Z|C=1)$ may not equal to $P(S,Z|C=0)$. 

On the other hand, as shown in Fig.~\ref{fig:causal-graph} (b),(c),(d),  we may have three cases of patch images after PatchMix operation:
 \textbf{i)} Figure~\ref{fig:causal-graph}~(b)  contains only causal features; \textbf{ii)} in Fig.~\ref{fig:causal-graph}~(c),  part of both causal and non-causal features are replaced by others; \textbf{iii)} in Fig.~\ref{fig:causal-graph}~(d), all causal information is removed. 

For those patches in $\mathbf{I}^q$ that contain causal information, the case described in Fig.~\ref{fig:causal-graph}~(b) can naturally decrease the influence of non-causal features. Thus such a case is 
 less correlated with the spurious features, as it directly removes features from other patches that may be spurious. For the case described in Fig.~\ref{fig:causal-graph}~(c), the generated patch integrates $S$ and $Z$ randomly from two different images, breaking the original sampling procedure and thus the selection bias it induces. In this regard, the causal features $S$ no longer depends on $Z$. Formally speaking, we can have $Y \perp Z|S,I$ for these patches in the training set, making it possible to eliminate the non-causal features if the model is trained well.

Note that our Patchmix can also handle the extreme cases that some patches of the "causal features" are completely removed, which corresponds to the graph in Fig.~\ref{fig:causal-graph} (d).
Theoretically our PatchMix can still  reduce the correlation between spurious features and causal features. Specifically, equipped with PatchMix, these patches also randomly mix information from other patches. In this regard, the patterns of original non-causal features are broken, and are randomly replaced by others, making correlations between these non-causal features and $Y$ no longer exist. In this way, compared to vanilla patch-level training methods without any mixing strategies (\emph{i.e.}, base model) or with other mixing strategies, our PatchMix can enforce the model to pay more attention to causal features, and remove spurious correlation of other non-causal features. This result can be verified by a noticeable improvement of our method over the base models in Tab.~\ref{tab:main_mini} in Tab.~\ref{tab:ablation}, additionally with interpretable visualization results in Fig.~\ref{fig:causal_visual}.

\begin{theorem}[Disentangling Causal Features]
\label{thm:disentangle}
Suppose the neural network parameter is composed of two parts: $(\psi,\theta)$, where $\psi: \mathcal{I} \to \mathbb{R}^{\mathrm{dim}(S)}$ extracts a representation $\psi(I)$ from the image, followed by $\theta$ for predicting the label. We denote $p^{o}(\psi,\theta)(y|I)$ and $p^{\mathrm{patch}}(\psi,\theta)(y|I)$ as the trained model on the original data (with Fig.~\ref{fig:causal-graph} (a)) and the one on the data after PatchMix (with Fig.~\ref{fig:causal-graph} (b),(c),(d)). Assume that the structural equation for the image $I$ is injective, \emph{i.e.}, $I \gets f_I(S,Z)$, then if both $p^{o}(\psi,\theta)(y|I)$ and $p^{\mathrm{patch}}(\psi,\theta)(y|I)$ are trained to perfectly fit the ground-truth distribution $p^*(y|I)$, we have:
\begin{itemize}
    \item In $p^{o}(\psi,\theta)(y|I)$, the $\psi^o(I)$ is dependent on the non-causal features $Z$.
    \item In $p^{\mathrm{patch}}(\psi,\theta)(y|I)$, it is with Lebesgue measure 0 for $\psi^{\mathrm{patch}}(I)$ to depend on $Z$. 
\end{itemize}
\end{theorem}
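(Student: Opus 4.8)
The plan is to work entirely in the latent space of $(S,Z)$, which is legitimate precisely because the structural equation $I \gets f_I(S,Z)$ is assumed injective: this lets me pull any representation back to the generative factors. Concretely I would set $h(S,Z) := \psi(f_I(S,Z)) \in \mathbb{R}^{\mathrm{dim}(S)}$ and observe that the fitted conditional is $\theta(h(S,Z))$, so that the statement ``$\psi$ depends on $Z$'' is equivalent to ``$h(S,\cdot)$ is non-constant in its second argument''. The whole argument then reduces to comparing the target conditional $p^*(y|I)$ induced by the two data-generating graphs and asking which representations can reproduce it.

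For the first bullet I would read the training-time target off Fig.~\ref{fig:causal-graph}(a). Since $C$ is a collider on $S \gets Y \to C \gets Z$, conditioning on the training event $C=1$ opens this path, and a short Bayes computation gives $p^*(y|S,Z,C=1) \propto p(y)\,p(S|y)\,p(C=1|y,Z)$, whose normalized form genuinely varies with $Z$ because the factor $p(C=1|y,Z)$ reweights the classes $y$ differently as $Z$ moves. Hence $p^*(y|I)$, viewed through $f_I^{-1}$, is a non-trivial function of $Z$. As perfect fit means $\theta(\psi^o(I)) = p^*(y|I)$, a representation $\psi^o$ that ignored $Z$ would make the left-hand side constant in $Z$, a contradiction; therefore $\psi^o$ must encode $Z$.

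For the second bullet I would first argue that PatchMix severs the spurious path and then deploy a dimension-counting / Sard argument for the measure-zero conclusion. The resampling in Fig.~\ref{fig:causal-graph}(b),(c),(d) generates the patch content independently of the sampling variable, deleting the red edges into $C$, so the target collapses to $p^*(y|S) \propto p(y)\,p(S|y)$, a function of $S$ alone, and perfect fit reads $\theta(h(S,Z)) = p^*(y|S)$. Assuming $S \mapsto p^*(\cdot|S)$ is injective (generic richness of the class-conditional features), this forces $S$ to be recoverable from $h$, i.e.\ there is $\Phi := (p^*(\cdot|\cdot))^{-1}\circ\theta$ with $\Phi(h(S,Z)) = S$. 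Now $h$ maps into $\mathbb{R}^{\mathrm{dim}(S)}$, exactly the dimension of $S$, so $\Phi:\mathbb{R}^{\mathrm{dim}(S)}\to\mathbb{R}^{\mathrm{dim}(S)}$ is a self-map of equal-dimensional spaces; by Sard's theorem its critical values form a Lebesgue-null set, and for every regular value $S$ the fiber $\Phi^{-1}(S)$ is $0$-dimensional (discrete). Since $h(S,Z)\in\Phi^{-1}(S)$ for all $Z$ and $h$ is continuous, $h(S,\cdot)$ must be constant off this null set, so $\psi^{\mathrm{patch}}$ depends on $Z$ only on a set of measure zero.

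The main obstacle I anticipate is making the measure-zero claim precise and honest: one must fix the ambient space in which ``Lebesgue measure $0$'' is taken (here the latent $S$-space, transported to image space by the injective $f_I$), justify the regular-value / Sard step together with the injectivity of $S \mapsto p^*(\cdot|S)$, and rule out degeneracies where positive-dimensional fibers of $\theta$ let $h$ absorb $Z$-variation on a positive-measure set. Establishing $Y \perp Z \mid S$ rigorously from the mixing mechanism, rather than simply reading it off the figure, is the other delicate point, since cases (b), (c) and (d) must be treated uniformly.
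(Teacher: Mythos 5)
Your proposal is sound and reaches both conclusions, but the route you take for the measure-zero claim is genuinely different from the paper's. For the first bullet the two arguments essentially coincide: the paper observes $Y \not\perp I \mid S$ under graph (a) and concludes that a representation $\psi^o(I)=h(S)$ cannot block the $I$--$Y$ dependence, while you make the same point more explicitly by computing $p^*(y\mid S,Z,C=1)\propto p(y)\,p(S\mid y)\,p(C{=}1\mid y,Z)$ and noting the $Z$-dependence of the reweighting factor; your version is the more informative one. The divergence is in the second bullet. Both proofs first establish, via injectivity of $f_I$, that $p_\theta(y\mid h(s,z_1))=p_\theta(y\mid h(s,z_2))=p^*(y\mid s)$, i.e.\ the composite $\theta\circ h$ is constant in $z$. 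The paper then converts this into ``$h$ is constant in $z$'' by a genericity argument over the \emph{parameter} $\theta$: if $h$ depended on $Z$, the only way to keep the output $z$-invariant would be for $\theta$ to annihilate that direction, and the set of such $\theta$ has Lebesgue measure zero in parameter space. You instead argue over the \emph{feature/latent} space: since $h$ maps into $\mathbb{R}^{\mathrm{dim}(S)}$ and $S$ must be recoverable from $h(S,Z)$, Sard's theorem makes the fibers discrete off a null set of $S$-values, and continuity plus connectedness of the $Z$-domain forces $h(S,\cdot)$ to be constant there. Your version buys a cleaner geometric picture and locates the null set in data space rather than parameter space, but at the cost of extra hypotheses the theorem does not state (injectivity of $s\mapsto p^*(\cdot\mid s)$, smoothness for Sard, connectedness of the $Z$-support), which you correctly flag; the paper's version needs only genericity of $\theta$ but is correspondingly more hand-wavy about what ``depend on $Z$'' means. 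One further difference worth noting: the paper treats cases (b), (c), (d) of Fig.~\ref{fig:causal-graph} separately (the formal argument is run only for case (c), with (b) and (d) dispatched informally), whereas you treat them uniformly by asserting the mixing deletes the edges into $C$ in all three; for case (d), where all causal content of the original image is replaced, this uniform treatment needs the extra observation---which the paper makes---that the label there is the gallery label, so the surviving $Z$ of the original image is paired with an independently drawn $Y$.
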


\begin{remark}
The injective assumption on $f_I$ is widely assumed in the literature, such as (variational) auto-encoder \cite{khemakhem2020variational} since it has been empirically verified that the image can be recovered perfectly from its latent embedding; causal inference \cite{janzing2009identifying, peters2014causal} for identifiability consideration and few-shot learning \cite{teshima2020few}. 
\end{remark}

\begin{proof}

For both $p^{o}(\psi,\theta)(y|I)$ and $p^{\mathrm{patch}}(\psi,\theta)(y|I)$, we have $Y \perp I|\psi(I)$ since $\psi(I)$ blocks all paths in the neural network from the input $I$ to the output $Y$, as the predicted label is the same to the ground-truth label $Y$ since both $p^{o}(\psi,\theta)(y|I)$ and $p^{\mathrm{patch}}(\psi,\theta)(y|I)$ equal to $p^*(y|I)$. According to Fig.~\ref{fig:causal-graph} (a), we have $Y \not \perp I|S$, therefore if $\psi^o(I)$ does not depend on $Z$, \emph{i.e.}, $\psi^o(I)=h(S)$ for some $h$, it fails to make $Y$ and $I$ conditionally independent. On the other hand, it is sufficient for $\psi^{\mathrm{patch}}(I)$ to depend only on $S$ to make $Y \perp I|\psi^{\mathrm{patch}}(I)$. Suppose $\psi^{\mathrm{patch}}(I)=h(S,Z)$ for some $h$. For the patch from a single whole image (marked by dot blue box in Fig.~\ref{fig:causal-graph} (b)), the representation can only contain the causal features $S$ as it removes other spurious correlated features. To show $h$ is independent to $Z$ for the patch as the mixture of two images (marked by Fig.~\ref{fig:causal-graph} (c)), recall that $p^{\mathrm{patch}}_{\theta}(y|\psi^{\mathrm{patch}}(I)) = p^*(y|[f_I^{-1}]_{\mathcal{S}}(I)) = p^*(y|s)$ with $I = f_I(s,z)$ and the injective assumption for $f_I$, we have that for each $s$ and two different $z_1,z_2$, we have $p^{\mathrm{patch}}_\theta(y|h(s,z_1)) = p^{\mathrm{patch}}_\theta(y|h(s,z_2)) = p^*(y|s)$, which means that the $p^{\mathrm{patch}}_\theta(y|h(s,z_1))$ does not depend on $z$. Since it is with Lebesgue measure 0 for the parameter $\theta$ to be 0 on $h(S,Z)$, which means an unblocked path from $Z$ to $Y$ if $h$ depends on $Z$, which comes up a contradiction. Therefore, it is with Lebesgue measure 0 to let $h$ depend on the non-causal feature $Z$. For Fig.~\ref{fig:causal-graph} (d), as original non-causal features $Z$ are randomly mixed with other features, the correlation between $Z$ and $Y$ is thus broken.
\end{proof}

This conclusion means it generically holds for the learned representation to eliminate the information of non-causal features. This can explain the benefit of PatchMix in removing the non-causal features during learning, by breaking the dependency between the causal features and the non-causal ones.

\noindent \textbf{Comparison with CutMix.} The difference of our method with CutMix is that the latter adopts a soft label (namely $Y_c$) to integrated images (as shown in Fig.~\ref{fig:causal-graph} (e)), which is determined by the label from both images. Therefore, such a soft label $Y_C$ is related to both features from two different images. In this regard, $Y \not\perp Z|S,I$ for each $I$, leading to the non-disentanglement between $S$ and $Z$. Therefore, this learning mechanism is not endowed with disentanglement ability that can be helpful for transferring to novel categories, as summarized in the following: 
\begin{theorem}
\label{thm:cutmix}
Denote $p^{\mathrm{cut}}(\psi,\theta)(y|I)$ as the trained model on the data after CutMix (with Fig.~\ref{fig:causal-graph} (d)). Under the same injective assumption on $f_I$, we have that the $\psi^{\mathrm{cut}}(I)$ is dependent on features $Z$, if $p^{\mathrm{cut}}(\psi,\theta)(y|I)$ is trained to equal to the ground-truth distribution $p^*(y|I)$.
\end{theorem}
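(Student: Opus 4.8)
The plan is to run the same contradiction argument that established the first bullet of Theorem~\ref{thm:disentangle} (the claim that the representation trained on the \emph{original} data must encode $Z$), but now with CutMix's soft label playing the role that the sampling bias played there. First I would reuse the common premise: since $\psi^{\mathrm{cut}}(I)$ blocks every path from the input $I$ to the output of the network, $Y \perp I \mid \psi^{\mathrm{cut}}(I)$, and because $p^{\mathrm{cut}}(\psi,\theta)(y|I)=p^*(y|I)$ the predicted label coincides with the ground truth. The whole burden of the theorem then reduces to showing that, under the CutMix labelling scheme, $Y$ remains dependent on $I$ even after conditioning on the causal features, i.e. $Y \not\perp I \mid S$.

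The source of this dependence is the labelling rule of Fig.~\ref{fig:causal-graph}~(e). Unlike PatchMix, whose localized hard patch labels let a $Z$-free representation reproduce the target, CutMix assigns the single area-weighted soft label $Y_c = \lambda\, y^{\mathrm{q}} + (1-\lambda)\, y^g$ to the whole mixed image, where $\lambda$ is the mixing proportion. Because $Y_c$ is computed from \emph{both} source labels and the spliced region carries both causal and non-causal content, the generative map producing $(I, Y_c)$ makes the target a genuine function of the non-causal features $Z$. I would formalize this as the exact reverse of the relation exploited for PatchMix in Theorem~\ref{thm:disentangle}: for a fixed causal value $s$, varying $z$ changes $p^*(y\mid s,z)$, so $Y \not\perp Z \mid S$ and hence $Y \not\perp I \mid S$.

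With this dependence in hand, the contradiction transfers verbatim from the earlier proof. Suppose for contradiction that $\psi^{\mathrm{cut}}(I)=h(S)$ depends only on $S$. Invoking the injectivity of $f_I$ to read off $s=[f_I^{-1}]_{\mathcal{S}}(I)$ as in Theorem~\ref{thm:disentangle}, the predictor $p^{\mathrm{cut}}_\theta(y\mid h(s))$ is a function of $s$ alone and is therefore constant in $z$ for each fixed $s$. This cannot equal $p^*(y\mid s,z)$, which the previous paragraph shows varies with $z$; equivalently, conditioning on $\psi^{\mathrm{cut}}(I)=h(S)$ fails to make $Y \perp I$, contradicting the premise. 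Hence $\psi^{\mathrm{cut}}(I)$ must depend on $Z$.

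The step I expect to be the crux is the middle one: rigorously certifying that the CutMix target depends non-degenerately on $Z$, rather than in a way that could be folded into $S$. This is precisely where the structural contrast with PatchMix resides, and it is what forces the opposite conclusion. I would argue it by writing $Y_c$ explicitly as the area-weighted mixture and noting that the weight $\lambda$ and the gallery content---hence $Z$---both enter the label, so no relabelling of the causal coordinate can absorb the $z$-dependence. Once this is secured, the injectivity of $f_I$ and the sufficiency/path-blocking property close the argument exactly as in Theorem~\ref{thm:disentangle}.
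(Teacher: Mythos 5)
Your proposal is correct and takes essentially the same route as the paper's own proof: both reduce the claim to the observation that the CutMix soft label $Y_c$ remains dependent on the non-causal features $Z$ even given $S$ (the paper writes this as $Y \not\perp Z \mid S, I$), so a $Z$-free representation cannot render the label conditionally independent of $I$, exactly mirroring the first bullet of Theorem~\ref{thm:disentangle}. Your write-up is in fact more explicit about the contradiction step than the paper's terse two-line argument, and your identification of the crux (that the area-weighted label inherits the spurious $Y$--$Z$ association from both source images) matches the paper's informal justification at the same level of rigor.
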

\begin{proof}
Similar to the proof of Thm.~\ref{thm:disentangle}, we have $Y_c \perp I|\psi^{\mathrm{cut}}(I)$, since in Fig.~\ref{fig:causal-graph} (d), we have $Y \not\perp Z|S,I$. Thus, it is necessary for $\psi^{\mathrm{cut}}(I)$ to depend on both $S$ and $Z$. Further, if the spurious correlation is strong enough, it is incapable to disentangle $S$ from $Z$ as they play nearly symmetric roles in affecting $Y$ and generating the input $I$.
\end{proof}

\noindent \textbf{Relationship with neural collapse.} As introduced in Sec.~\ref{sec:intro}, a recent work~\cite{galanti2021role}  studies the 
neural collapse that the variance of features from the novel category is small enough to separate different categories; and it  
claims that the property of neural collapse 
can be well transferred to novel data given sufficient base training data. Intuitively, the learned disentangled causal features can better correlate with novel class labels compared with the non-causal features. Therefore, the model trained with PatchMix is expected to receive better neural collapse on novel data since PatchMix can help break up the dependency between causal and non-causal features during training.

\begin{figure}
    \centering
    \includegraphics[scale=0.37]{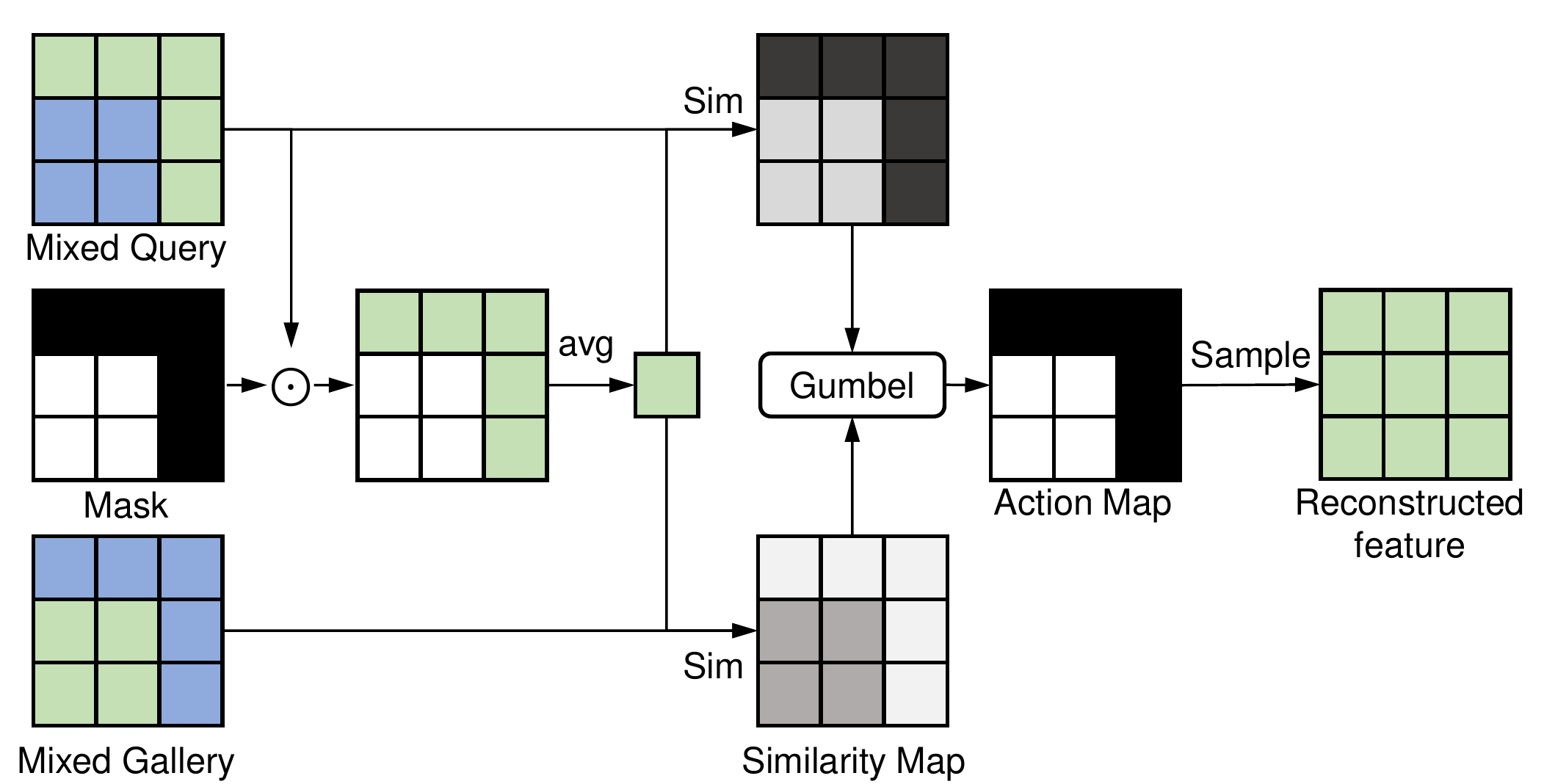}
    \caption{The module for correlation-guided reconstruction. Each position of the mixed query and gallery feature map is compared with the known area from the original image. Then the similarity is further normalized with gumbel-softmax to sample an action map that is used to select the patches.}
    \label{fig:corr}
\end{figure}

\subsection{Discriminative Feature Enhancement}
Although the non-causal features can be eliminated, it is not necessarily for these learned causal features to be discriminative enough for classification between similar classes. For example, the shape feature is causally related to both the dog and the wolf; however, the learned feature from two classes may be too similar to be classified. To make our causal features be more discriminative,
we further propose two modules, i.e., \emph{correlation-guided reconstruction} module in Sec.~\ref{sec:cgr} and \emph{hardness-aware PatchMix} module in Sec.~\ref{sec:hard}, respectively for instance discrimination and classifying between similar classes.

\subsubsection{Correlation-guided Reconstruction Module \label{sec:cgr}}
Inspired by recent works on self-supervised learning \cite{he2020momentum}, we propose a novel module while takes advantage of PatchMix procedure to contrast image features from different categories. Specifically, given query image $\mathbf{I}^{\mathrm{q}}$, gallery image $\mathbf{I}^{\mathrm{g}}$ and the corresponding mixture mask $M$, we can have the resulting mixed image $\tilde{\mathbf{I}}^{\mathrm{q}}$ and its feature map $\tilde{X}^{\mathrm{q}}$. Meanwhile, by exchanging the role of $\mathbf{I}^{\mathrm{q}}$ and $\mathbf{I}^{\mathrm{g}}$ and using $1-M$ as mask, we can have image $\tilde{\mathbf{I}}^g$ and feature map $\tilde{X}^g$ which contain the counterpart of $\tilde{\mathbf{I}}^{\mathrm{q}}$.
{\color{black}{In this way, $\mathbf{I}^{\mathrm{q}}$ and $\mathbf{I}^{\mathrm{g}}$ can be properly reconstructed by selecting  correct patches from these two feature maps. The selection process can be based on successfully telling apart patch-level features from different images.}} Therefore, we make the reconstruction of both of query images and gallery images as the goal of this module.

Take the $(i,j)-$th patch in $\tilde{\mathbf{I}}^{\mathrm{q}}$, we evaluate the confidence of it belonging to $\mathbf{I}^{\mathrm{q}}$ and $\mathbf{I}^{\mathrm{g}}$ as 
\begin{align}
\alpha^{\mathrm{q}}_{i,j}&=\sum_{m,n=1,1}^{h,w}\sum_{m,n\neq i,j} M_{m,n}\cdot <\tilde{f}_{m,n}^{\mathrm{q}}, \tilde{f}_{i,j}^{\mathrm{q}}> \\   
\alpha^g_{i,j}&=\sum_{m,n=1,1}^{h,w}\sum_{m,n\neq i,j} M_{m,n}\cdot <\tilde{f}_{m,n}^{\mathrm{g}}, \tilde{f}_{i,j}^q>
\end{align}
In other words, we compare each patch to the known patches from $\mathbf{I}^{\mathrm{q}}$ except itself. The confidence for belonging to class $B$ can be calculated in the same way by replacing $M$ and $\tilde{f}_{m,n}^{\mathrm{q}}$ with $1-M$ and $\tilde{f}_{m,n}^g$, respectively. Then we can select these patches according to the confidence,
\begin{align}
    \bar{X}^{\mathrm{q}}_{i,j}&=\hat{\alpha}^g_{i,j}\tilde{X}_{i,j}^{\mathrm{g}} + \hat{\alpha}^{\mathrm{q}}_{i,j}\tilde{X}_{i,j}^q \\
    \hat{\alpha}^g_{i,j} &= \sigma(\alpha^{\mathrm{q}}_{i,j}/T) \\
    \hat{\alpha}^{\mathrm{q}}_{i,j} &= \sigma(\alpha^{\mathrm{q}}_{i,j}/T)
\end{align}
where $T$ is temperature and $\sigma$ is a normalization function implemented as gumbel-softmax~\cite{jang2016gumbel}. 
After obtaining the merged feature $\bar{X}^{\mathrm{q}}, \bar{X}^g$, we process them with a decoder, whose structure is a reversal of the feature extractor, to generate the reconstructed images $\bar{\mathbf{I}}^{\mathrm{q}}, \bar{\mathbf{I}}^g$. 

The objective function of this module can be formulated as
\begin{align}
    L_{CR} &= L_{sel} + \lambda L_{rec} \\
    L_{sel} &= \mathrm{CE}(\hat{\alpha}^{\mathrm{q}}, M) + \mathrm{CE}(\hat{\alpha}^g, 1-M) \\
    L_{rec} &= \|\hat{A}-A\|_1 + \|\hat{B}-B\|_1
\end{align}

It is noteworthy that unlike the common way in conditional image generation, our CGR does not directly concatenate $\bar{X}^{\mathrm{q}}$ and $\bar{X}^g$ as input. Also, we do not adopt techniques like adaptive normalization~\cite{huang2017adain}. Compared with reconstruction-based method, which was empirically shown \cite{robert2018hybridnet} to be not helpful for classification, we additionally separate the features from different classes by successfully distinguish query images and their gallery counterpart which have different labels. Without the correlation guidance, the task can easily degenerate to the task of repairing images \emph{e.g.}, image inpainting, which unfortunately may not be necessarily helpful to few-shot classification.

\subsubsection{Hardness-aware PatchMix Module \label{sec:hard}}

Since all query images are mixed with random sampled images from the sample episode, the vanilla PatchMix lacks control of difficulty, thus it may be even difficult for learned causal features to be discriminative enough, especially between similar classes. To resolve this problem, we propose mixing images from similar classes, in which the similarity among base categories can be estimated by a coarse prediction model with PatchMix in the first stage.

Specifically, given an episode with support set $\mathcal{S}$ and a trained feature extractor $\phi$, we can first get the prototype of each category $p_i, i=1,\cdots, N$ in the same way as Eq.~\ref{eq:proto}. Then the similarity between $i$-th class and $j$-th class can be represented by the normalized cosine similarity between their prototypes:
\begin{equation}
    \mathrm{sim}(i,j)=\frac{<p_i, p_j>}{\|p_i\|\|p_j\|}
    \label{eq:sim}
\end{equation}
In this way, we can treat the relationship among the $N$ categories as a complete $N$-node graph whose edges are the negative similarity of the corresponding prototypes. Then we solve a Travelling Salesman
Problem (TSP)  by setting a random class as the starting point, which results in a path with the lowest cost, thus can generate images from classes that are easy to be obfuscated owing to high similarity. Each node is assigned as the gallery class to its parent node, followed by the PatchMix training scheme as introduced before. The objective function of hardness-aware PatchMix can be written as
\begin{align}
\label{eq:second_loss}
    \mathcal{L}_{ha} &= \mathcal{L} + \ell_{kd} \\
    \ell_{kd} &= \frac{1}{NQ} \sum_{i=1}^{NQ} \psi(F(\mathbf{I}_i^{\mathrm{q}}), \hat{F}(\mathbf{I}_i^{\mathrm{q}}))
\end{align}
where $\mathcal{L}$ is defined in Eq.~\ref{eq:loss} and $\psi$ is a knowledge distillation loss~\cite{hinton2015distilling}, which can be implemented as MSE or KL divergence.

In all, the workflow of our algorithm can be separated into two stage, as stated in Alg.~\ref{alg:main}, including training the baseline model in Sec.~\ref{sec:baseline} together with PatchMix and CGR in the first stage and hardness-aware PatchMix in the second stage.

\begin{algorithm}[t]
\caption{Training FSL Model with PatchMix}
\label{alg:main}
\begin{algorithmic}[1]
\REQUIRE $\mathcal{D}_s$: meta-train set
\STATE \# coarse PatchMix
\WHILE{not converge}
\STATE Sample batch of episodes $B$ from $\mathcal{D}_s$
  \FORALL{episode}
    \FORALL{query sample $\mathbf{I}_i^{\mathrm{q}}$}
      \STATE Apply PatchMix to $\mathbf{I}_i^{\mathrm{q}}$ according to Eq.~\ref{eq:patchmix}
      \ENDFOR
     \ENDFOR
 \STATE Calculate objective function $\mathcal{L}$ as in Eq.~\ref{eq:loss}
 \STATE Update model parameters according to $\mathcal{L}$
\ENDWHILE
\STATE
\STATE \# hardness-aware PatchMix
\WHILE{not converge}
\STATE Sample batch of episodes $B$ from $\mathcal{D}_s$
  \FORALL{episode}
    \STATE calculate class-wise similarity according to Eq.~\ref{eq:sim}
    \STATE assign gallery class
    \FORALL{query sample $\mathbf{I}_i^{\mathrm{q}}$}
      \STATE Apply PatchMix to $\mathbf{I}_i^{\mathrm{q}}$ according to Eq.~\ref{eq:patchmix}
      \ENDFOR
     \ENDFOR
 \STATE Calculate objective function $\mathcal{L}$ as in Eq.~\ref{eq:second_loss}
 \STATE Update model parameters according to $\mathcal{L}$
\ENDWHILE
\end{algorithmic}
\end{algorithm}

\subsection{Adaptation to Unsupervised FSL\label{ufsl}}
In few-shot learning, it is sometimes hard to get  {any} labeled samples, especially when the labeling cost is high. To resolve this problem, some recent works~\cite{hsu2018unsupervised,khodadadeh2018unsupervised} attempt to emphasize unsupervised few-shot learning. These works firstly adopt unsupervised learning methods to train the networks, and then the trained networks are used for testing. For instance, CACTUs~\cite{hsu2018unsupervised} selects several unsupervised learning methods such as DeepCluster~\cite{caron2018deep} to get a representation vector for each sample. These representation vectors are then utilized to cluster the samples so that each sample can be assigned by a pseudo label.
After obtaining the pseudo label for each sample, we can {apply} some few-shot learning methods to train the networks. 
 
Following the standard practice above, this section further extends PatchMix to unsupervised FSL. Specifically, we adopt CACTUs-ProtoNet as our baseline. There are two main components for CACTUs, \emph{unsupervised pretraining} and \emph{pseudo label training}. Our adaptation considers using PatchMix for both parts.
 
\noindent (1) \textbf{Unsupervised pretraining.} We take into account the current progress in unsupervised learning and employ a modified version of MoCo~\cite{he2020momentum} based on our PatchMix. For the original version of MoCo, a batch of sampled images are augmented to generate two sets of images called the key images and the query images.
 For each query image $\mathbf{I}^{\mathrm{q}}_i$, the key image $\mathbf{I}^{key}_i$ that shares the same image before augmentation is its positive sample and other key images are negative samples.
 These images are fed into networks to get spatially averaged feature vectors, in which vectors of the $i-$th key image are denoted as $\bar{X}_i^{key} \in\mathbb{R}^{N}$ and vectors of the $i-$th query image are denoted as $\bar{X}_i^{\mathrm{q}} \in\mathbb{R}^{N}$. Here we use  $\hat{X}^{\mathrm{q}}_{ij} = \frac{<\bar{X}_i^{q}, \bar{X}_j^{key}>}{\|\bar{X}_i^{q}\|\|\bar{X}_j^{key}\|}$ to denote the cosine similarity between two features.
 For each batch of query images, we calculate its loss with the form  
\begin{equation}
    \mathcal{L}_{MoCo} =\sum_{i} -\mathrm{log}\frac{e^{\hat{X}^{\mathrm{q}}_{ii}/T}}{\sum_{j} e^{\hat{X}^{\mathrm{q}}_{ij}/T}}
\end{equation}
where $T$ is the temperature. 

To implement our modified version which we call PatchMoCo, we first consider a dense variant of MoCo. We remove the global average pooling layer for query features and get the spatial feature $X_i^{\mathrm{q}}\in\mathbb{R}^{N\times h\times w}$. Each spatial position of $X_i^{\mathrm{q}}$ is denoted as $X_{i,s,t}^{\mathrm{q}}\in\mathbb{R}^{N}$.
Similarly the cosine similarity for each position $s,t$ is 
  $  \hat{X}^{\mathrm{q}}_{istj} = \frac{<{X}_{ist}^{q}, \bar{X}_j^{key}>}{\|{X}_{ist}^{q}\|\|\bar{X}_j^{key}\|}$.
  The altered loss has the following form,
  \begin{equation}
    \mathcal{L}_{d} =\sum_{i}\sum_{st} -\mathrm{log}\frac{e^{\hat{X}^{\mathrm{q}}_{isti}/T}}{\sum_{j} e^{\hat{X}^{\mathrm{q}}_{istj}/T}}
\end{equation}
  Afterwards, we add our PatchMix into the unsupervised pretraining.
  The process of mixing patches is  as the way in Sec.~\ref{sec:patchmix}. We modify the form of loss function.  If some of its patches are switched for each image, these new patches are negative samples. For these new patches, we cannot find their positive samples, so we do not include them into the calculation of final loss. 
  For convenience, we get a mask $K^{\mathrm{q}}_i\in\{0,1\}^{h\times w}$, where 1 means patch without switch and 0 means switched patch. The new loss function is,
  \begin{equation}
    \mathcal{L}_{PMC} =\sum_{i}\sum_{st} -K^{\mathrm{q}}_{ist}\mathrm{log}\frac{e^{\hat{X}^{\mathrm{q}}_{isti}/T}}{\sum_{j} e^{\hat{X}^{\mathrm{q}}_{istj}/T}}
\end{equation}

  \noindent (2) \textbf{Pseudo-label training}: After we get pseudo-label {by clustering the feature of unsupervised pretraining}, each training sample can be used as the same in supervised FSL. Then PatchMix is applied to the training phase as in Sec.~\ref{sec:patchmix}.

\section{Experiments}
\subsection{Datasets and setting}
\noindent\textbf{Datasets.} We mainly adopt four datasets for our experiments. \textbf{i)} \textit{mini}ImageNet dataset~\cite{vinyals2016matching}, containing 600 images in each of the 100 categories, is a small subset of ImageNet. We follow the split in \cite{ravi2016optimization}, where 64, 16, 20 {categories} are used for train,  validation and test {set}, respectively.  ii) \textit{tiered}ImageNet dataset~\cite{ren2018meta} is a larger subset of ILSVRC-12 dataset. It consists of 34 categories with 779,165 images in total. These categories are further broken into 608 {categories}, where 351 {categories} are used for training, 97 for validation and 160 for testing. \textbf{iii)} CIFAR-FS~\cite{bertinetto2018meta} divides CIFAR-100 into 64 meta-train {categories}, 16 meta-val {categories} and 20 meta-test {categories}. \textbf{iv)} CUB~\cite{welinder2010caltech}, a bird dataset with 200 total categories and 6033 total images. In few-shot learning, 100, 50, 50 species are respectively used for training, validation and test sets. Besides, Cars~\cite{krause2013car}, Places~\cite{zhou2017places} and Plantae~\cite{van2018plantae} are also utilized in cross-domain setting following \cite{tseng2020cross}. Images in all datasets are resized to $84\times 84$ before training and testing. 

\noindent\textbf{Experimental setup.} Stochastic Gradient Descent (SGD)~\cite{bottou2010large} with $5e-4$ weight decay and cosine learning rate decay \cite{loshchilov2016sgdr} is used to optimize our model. For \textit{mini}ImageNet and CIFAR-FS, the initial learning rate is set as $0.15$ and $0.05$ for \textit{tiered}ImageNet. Random cropping, horizontal flipping and color jittering are adopted for data augmentation during training, which is the same as in CAN~\cite{hou2019cross}.
We test 2000 episodes sampled from meta-test set for all experiments. For correlation-guided reconstruction, $\lambda_1$ is set as $0.5$ for all datasets, $\lambda_2=0.1$ for \textit{tiered}ImageNet and $0.25$ for other datasets. As for the hardness-aware PatchMix, we empirically set the distillation function $\psi$ as MSE for 1-shot tasks and KL divergence for 5-shot tasks. Models and codes will be released.

\subsection{Comparison with state-of-the-art methods}
To extensively show the effectiveness of our method, we test our PatchMix across three commonly-used settings, \emph{i.e.}, single domain, cross domain and unsupervised few-shot learning. For each setting, we compare our model with the recent state-of-the-art competitors, where the average accuracy is reported. For supervised learning settings, we additionally report the 95\% \emph{confidence interval} (CI). 

\begin{table}[t]
 \centering
 {
  \begin{tabular}{ l|l|cc}
  \hline
 \multirow{2}{*}{Model} & \multirow{2}{*}{Backbone} &  \multicolumn{2}{c}{\textit{mini}ImageNet}  \tabularnewline
 \cline{3-4}
& &  1-shot  &   5-shot\\

   \shline
   ProtoNet~\cite{snell2017prototypical} & \multirow{5}{*}{Conv4} & 49.42$\pm$0.78 & 68.20$\pm$0.72\tabularnewline
   \cline{3-4}
   MatchingNet~\cite{vinyals2016matching}&  & 43.56$\pm$0.84 & 55.31$\pm$0.73 \tabularnewline
   \cline{3-4}
   RelationNet~\cite{sung2018learning} &  & 50.44$\pm$0.82 & 65.32$\pm$0.70 \tabularnewline
   \cline{3-4}
   MAML~\cite{finn2017model} &  & 48.70$\pm$1.75 & 63.11$\pm$0.92 \tabularnewline
   \cline{3-4}
   Dynamic Few-shot~\cite{gidaris2018dynamic} &  & 56.20$\pm$0.86 & 72.81$\pm$0.62\tabularnewline
    \hline
   \cline{3-4}
     LEO~\cite{rusu2018meta} & \multirow{7}{*}{WRN-28} & 61.76$\pm$0.08 & 77.59$\pm$0.12 \tabularnewline
     \cline{3-4}
     PPA~\cite{qiao2018few} &  & 59.60$\pm$0.41 & 73.74$\pm$0.19\tabularnewline
     \cline{3-4}
     Robust dist++~\cite{dvornik2019diversity} &  & 63.28$\pm$0.62 & 81.17$\pm$0.43 \tabularnewline
     \cline{3-4}
     wDAE~\cite{gidaris2019generating} &  & 61.07$\pm$0.15 & 76.75$\pm$0.11\tabularnewline
 \cline{3-4}
       CC+rot~\cite{gidaris2019boosting} &  & 62.93$\pm$0.45 & 79.87$\pm$0.33 \tabularnewline \cline{3-4}
       DC~\cite{yang2021free}$^{*}$ &  & 67.96$\pm$0.45 & 83.45$\pm$0.31 \tabularnewline \cline{3-4}
    FEAT~\cite{ye2020feat} &  & 65.10$\pm$0.20 & 81.11$\pm$0.14 \tabularnewline
    \hline
     TapNet~\cite{yoon2019tapnet} & \multirow{14}{*}{Res-12} & 61.65$\pm$0.15 & 76.36$\pm$0.10 \tabularnewline
     \cline{3-4}
     MetaOptNet~\cite{lee2019meta} &  & 62.64$\pm$0.61 & 78.63$\pm$0.46 \tabularnewline
     \cline{3-4}
     CAN~\cite{hou2019cross} &  & 63.85$\pm$0.48 & 79.44$\pm$0.34 \tabularnewline
     \cline{3-4}
      FEAT~\cite{ye2020feat} &  & 66.78$\pm$0.20 & 82.05$\pm$0.14 \tabularnewline \cline{3-4}
      E$^3$BM~\cite{liu2020ensemble} &  & 63.80$\pm$0.40 & 80.10$\pm$0.30 \tabularnewline \cline{3-4}
      DSN-MR~\cite{simon2020adaptive} &  & 64.60$\pm$0.72 & 79.51$\pm$0.50 \tabularnewline \cline{3-4}
      Net-Cosine~\cite{liu2020negative} &  & 63.85$\pm$0.81 & 81.57$\pm$0.56 \tabularnewline\cline{3-4}
      FRN~\cite{wertheimer2021few} &  & 66.45$\pm$0.19 & 82.83$\pm$0.13 \tabularnewline \cline{3-4}
      Tian et.al.~\cite{tian2020rethinking} &  & 64.82$\pm$0.60 & 82.14$\pm$0.43\tabularnewline \cline{3-4}
      ConstNet~\cite{xu2021attentional} &  & 64.89$\pm$0.23 & 79.95$\pm$0.37 \tabularnewline \cline{3-4}
      IEPT~\cite{zhang2020iept} &  & 67.05$\pm$0.44 & 82.90$\pm$0.30 \tabularnewline \cline{3-4}
      MELR~\cite{fei2020melr} &  & 67.40$\pm$0.43 & 83.40$\pm$0.28 \tabularnewline \cline{3-4}
      DMF~\cite{xu2021learning} &  & 67.76$\pm$0.46 & 82.71$\pm$0.31 \tabularnewline \cline{3-4}
      DeepEMD~\cite{zhang2020deepemd} &  & 68.77$\pm$0.29 & 84.13$\pm$0.53 \tabularnewline \cline{1-4}
      Base Model & \multirow{3}{*}{Res-12} & 64.96$\pm$0.51 & \bf 80.51$\pm$0.33 \tabularnewline\cline{3-4}
     Ours &  &\bf 69.38$\pm$0.46 & \bf 84.14$\pm$0.30 \tabularnewline\cline{3-4}
     Ours+DC &  &\bf 69.75$\pm$0.44 & \bf 84.88$\pm$0.30  \tabularnewline
     \hline
  \end{tabular}
 }
 \vspace{0.1in}
\caption{\label{tab:main_mini} 5-way few-shot accuracies with $95\%$ confidence interval on \textit{mini}ImageNet, {*} denotes results reproduced by us.}
\end{table}

\begin{table}[t]
 \centering
 {\setlength{\tabcolsep}{1.3mm}{
  \begin{tabular}{ l|cccc}
  \hline
 \multirow{2}{*}{Model} &   \multicolumn{2}{c}{\textit{tiered}ImageNet}&   \multicolumn{2}{c}{CIFAR-FS} \tabularnewline
 \cline{2-5}
&     1-shot & 5-shot&   1-shot & 5-shot\\

   \shline
       CC+rot~\cite{gidaris2019boosting}    & 70.53$\pm$0.51 & 84.98$\pm$0.36 & 75.38$\pm$0.31 & 87.25$\pm$0.21\tabularnewline \cline{2-5}
       DC~\cite{yang2021free}$^{*}$    & 74.05$\pm$0.48 & 88.30$\pm$0.32 & --- & ---\tabularnewline \cline{2-5}
     MetaOptNet~\cite{lee2019meta}    & 65.99$\pm$0.72 & 81.56$\pm$0.53& 72.11$\pm$0.96 & 84.32$\pm$0.65\tabularnewline
     \cline{2-5}
     CAN~\cite{hou2019cross}    & 69.89$\pm$0.51 & 84.23$\pm$0.37& --- & ---\tabularnewline
     \cline{2-5}
      FEAT~\cite{ye2020feat}    & 70.80$\pm$0.23 & 84.79$\pm$0.16& --- & ---\tabularnewline \cline{2-5}
      E$^3$BM~\cite{liu2020ensemble}   & 71.20$\pm$0.40 & 85.30$\pm$0.30& --- & ---\tabularnewline \cline{2-5}
      DSN-MR~\cite{simon2020adaptive}    & 67.39$\pm$0.82 & 82.85$\pm$0.56& --- & ---\tabularnewline \cline{2-5}
      FRN~\cite{wertheimer2021few}    & 72.06$\pm$0.25 & 86.89$\pm$0.14& --- & ---\tabularnewline \cline{2-5}
      Tian et.al.~\cite{tian2020rethinking}    & 71.52$\pm$0.69 & 86.03$\pm$0.49& 73.90$\pm$0.80 & 86.90$\pm$0.50\tabularnewline \cline{2-5}
      ConstNet~\cite{xu2021attentional}    & --- & --- & 75.40$\pm$0.20 & 86.80$\pm$0.20\tabularnewline \cline{2-5}
      IEPT~\cite{zhang2020iept}    & 72.24$\pm$0.50 & 86.73$\pm$0.34& --- & ---\tabularnewline \cline{2-5}
      MELR~\cite{fei2020melr}    & 72.14$\pm$0.51 & 87.01$\pm$0.35& --- & ---\tabularnewline \cline{2-5}
      DMF~\cite{xu2021learning}    & 71.89$\pm$0.52 & 85.96$\pm$0.35& --- & ---\tabularnewline \cline{2-5}
      DeepEMD~\cite{zhang2020deepemd}    & 74.29$\pm$0.32 & 87.08$\pm$0.60& --- & ---\tabularnewline \cline{1-5}
     Ours   &  73.48$\pm$0.51 & \bf 87.35$\pm$0.32 & \bf 77.87$\pm$0.49 & \bf 88.94$\pm$0.32\tabularnewline\cline{2-5}
     Ours+DC     &  \bf 75.06$\pm$0.48  & \bf 88.92$\pm$0.32 & --- & ---\tabularnewline
     \hline
  \end{tabular}
 }}
\caption{\label{tab:main_tiered} 5-way few-shot accuracies with $95\%$ confidence interval on \textit{tiered}ImageNet and CIFAR-FS., {*} denotes results reproduced by us.}
\end{table}

\noindent\textbf{Single-domain results.} We adopt three datasets including \textit{mini}ImageNet, \textit{tiered}ImageNet and CIFAR-FS in the single domain setting, where the model is trained on the meta-train set of each dataset and tested on the corresponding meta-test set. The results are shown in Tab.~\ref{tab:main_mini} and Tab.~\ref{tab:main_tiered}. 

In both settings on \textit{mini}ImageNet, our model surpasses all competitors that share the same backbone with us. Particularly, we improve over DeepEMD v2 \cite{zhang2020deepemd} (\emph{i.e.}, the current state-of-the-arts method) by 0.61\% in 1-shot setting. Besides, our methods also performs better (\emph{e.g.}, outperforms FEAT by 4.28\% on 1-shot and 3.03\% on 5-shot), even compared with those with WRN-28 that is much larger and hence has more capacity than Res-12. We observe that this improvement is smaller on 5-shot setting, which may due to extra parameters (\emph{e.g.}, attention module in FEAT and meta-filter in DMF) or specifically-designed but computational expensive algorithm (\emph{e.g.}, the Earth Mover Distance in DeepEMD) that can perform better with more support data. In contrast, our Patchmix, which has the similar procedure of testing with the basic ProtoNet, is more effective and efficient, in terms of prediction power and implementation. This can be contributed to the better representation (more specifically, causally semantic features as will be shown in Sec.~\ref{sec.ablation}) for metric-based classification. Meanwhile, our model also enjoys a suitable confidence interval, which means the robustness against episodes with different categories and difficulty. 

On \textit{tiered}ImageNet and CIFAR-FS the results are nearly consistent with those on \textit{mini}ImageNet (\emph{e.g.}, our model leads by 2.47\% and 1.69\% in 1-shot and 5-shot on CIFAR-FS). Besides, our method is flexible to be further improved when combined with other orthogonal methods, as implied by the improvement when combined with DC (\emph{i.e.}, Ours+DC) which enables the data augmentation on both training and testing phases. Particularly, such a combined method achieves the best performance on \textit{tiered}ImageNet\footnote{Note that as the pre-trained weight for \textit{tiered}ImageNet is not released in DC, we reproduce it by the  (\href{https://github.com/ShuoYang-1998/Few_Shot_Distribution_Calibration}official DC codes. We also adopt the weight provided in S2M2~\cite{mangla2020charting} (\href{https://github.com/nupurkmr9/S2M2_fewshot}{link}) strictly following ~\cite{yang2021free}.}. 

\noindent\textbf{Cross-domain results.} In this setting, we follow the previous methods to train our model on the meta-train set of \textit{mini}ImageNet and test it on the meta-test set of CUB, Cars, Places and Plantae. Since these datasets are fine-grained ones, successful classification mainly requires the model to be able to concentrate on some details which may not be useful in \textit{mini}ImageNet. Therefore the collapsing base IV can have a more serious damage in such a setting. As shown in Tab.~\ref{tab:cross}, our model outperforms the best competitor by at most 5.82\% on 1-shot and 3.14\% on 5-shot among these datasets. This means that when trained on \textit{mini}ImageNet, our model can learn the domain specific information, but also additional knowledge that is useful in other domains.

\noindent\textbf{Unsupervised few-shot learning results.} In this setting, we use \textit{mini}ImageNet as the target dataset. The available information is the same as that in supervised single domain FSL except that labels are not assigned to the base {category} images. We compare our method in Tab.~\ref{tab:unsup} with CACTUs~\cite{hsu2018unsupervised} and UMTRA~\cite{khodadadeh2018unsupervised}. As mentioned in Sec.~\ref{ufsl}, our model is built based on CACTUs-ProtoNet with a changed cluster method, which results in a 1.80\% and 2.56\% improvement on both 1-shot and 5-shot tasks. The superiority further reflects the efficacy of our proposed PatchMix.

\begin{table*}[!t]
 \centering
 {
  \begin{tabular}{ l|cccccccc}
  \hline
 \multirow{2}{*}{Model}    &   \multicolumn{2}{c}{CUB} &   \multicolumn{2}{c}{Cars} &   \multicolumn{2}{c}{Places} &   \multicolumn{2}{c}{Plantae} \tabularnewline
 \cline{2-9}
&   1-shot  &   5-shot &   1-shot  &   5-shot&   1-shot  &   5-shot&   1-shot  &   5-shot \\
   \shline
      RelationNet~\cite{sung2018learning}  & 42.44$\pm$0.77 & 57.77$\pm$0.69 & 29.11$\pm$0.60 & 37.33$\pm$0.68 & 48.64$\pm$0.85 & 63.32$\pm$0.76 & 33.17$\pm$0.64 & 44.00$\pm$0.60 \tabularnewline \cline{2-9}
      GNN~\cite{garcia2017few}  & 45.69$\pm$0.68 & 62.25$\pm$0.65 & 31.79$\pm$0.51 & 44.28$\pm$0.63 & 53.10$\pm$0.80 & 70.84$\pm$0.65 & 35.60$\pm$0.56 & 52.53$\pm$0.59\tabularnewline \cline{2-9}

    LFT~\cite{tseng2020cross}  & 47.47$\pm$0.75 & 66.98$\pm$0.68 & 31.61$\pm$0.53 & 44.90$\pm$0.64 & 55.77$\pm$0.79 & 73.94$\pm$0.67 & 35.95$\pm$0.58 & 53.85$\pm$0.62\tabularnewline \cline{2-9}
    LRP~\cite{sun2021explanation}  & 48.29$\pm$0.51 & 64.44$\pm$0.48 & 32.78$\pm$0.39 & 46.20$\pm$0.46 & 54.83$\pm$0.56 & 74.45$\pm$0.47 & 37.49$\pm$0.43 & 54.46$\pm$0.46\tabularnewline \cline{2-9}

     Ours  & \bf 49.47$\pm$0.45 & \bf 68.90$\pm$0.40 & \bf 33.78$\pm$0.37 &\bf 46.78$\pm$0.43 &\bf 60.65$\pm$0.48 &\bf 77.59$\pm$0.38 &\bf 40.22$\pm$0.39 &\bf 56.01$\pm$0.37

 \tabularnewline
     \hline
  \end{tabular}
 }
 \vspace{0.1in}
\caption{\label{tab:cross}5-way cross-domain few-shot accuracies with $95\%$ confidence interval on CUB, Cars, Places and Plantae.}
\end{table*}

\begin{table}[t]
 \centering
 {
  \begin{tabular}{ l|l|cc}
  \hline
 \multirow{2}{*}{Model}  & \multirow{2}{*}{Clustering} &   \multicolumn{2}{c}{\textit{mini}ImageNet} \tabularnewline
 \cline{3-4}
& &   1-shot  &   5-shot  \\

   \shline
    kNN & \multirow{5}{*}{BiGAN} & 25.56 & 31.10\tabularnewline \cline{3-4}
    linear &  & 27.08 & 33.91\tabularnewline \cline{3-4}
    MLP &  & 22.91 & 29.06\tabularnewline \cline{3-4}
      CACTUs-MAML~\cite{hsu2018unsupervised} &    & 36.24 & 51.28\tabularnewline \cline{3-4}
      CACTUs-ProtoNets~\cite{hsu2018unsupervised} &  & 36.62 & 50.16\tabularnewline \cline{3-4}\hline
    kNN & \multirow{5}{*}{DeepCluster} & 28.90 & 42.25\tabularnewline \cline{3-4}
    linear &  & 29.44 & 39.79\tabularnewline \cline{3-4}
    MLP &  & 29.09 & 39.67\tabularnewline \cline{3-4}
      CACTUs-MAML~\cite{hsu2018unsupervised} &   & 39.90 & 53.97\tabularnewline \cline{3-4}
      CACTUs-ProtoNets~\cite{hsu2018unsupervised} &  & 39.18 & 53.36\tabularnewline \cline{3-4}\hline
      CACTUs-ProtoNets~\cite{hsu2018unsupervised}$^{*}$  & MoCo &  39.18 & 53.36\tabularnewline \cline{3-4}\hline
      UMTRA~\cite{khodadadeh2018unsupervised} & N/A & 39.93 & 50.73\tabularnewline \cline{3-4}\hline
      
     Ours & PatchMoCo  & \bf 41.73 & \bf 55.92\tabularnewline
     
     \hline
  \end{tabular}
 }
 \vspace{0.1in}
\caption{\label{tab:unsup}5-way few-shot accuracies on \textit{mini}ImageNet in unsupervised setting.{*} denotes results produced by us.}

\end{table}

\begin{table*}[htb]
\begin{minipage}{0.24\textwidth} 
\centering 
{
\setlength{\tabcolsep}{1.5mm}{
  \begin{tabular}{ l|cc}
  \hline
Augment & K=1 & K=5 \\
\shline

CutMix & 65.91 & 79.10\\
Mixup & 64.69 & 79.08\\
Man. Mixup & 64.53 & 77.95\\
PatchMix & 68.34 & 83.16\\
\hline
\end{tabular}}}
\end{minipage}
\begin{minipage}{0.24\textwidth} 
 \centering 
{
 \setlength{\tabcolsep}{1.5mm}{
  \begin{tabular}{l|cc}
  \hline
CAN & K=1 & K=5 \\
\shline

base   & 65.05 &  81.42 \\
+CutMix   & 65.27 &  79.53 \\
+PatchMix  & 67.77  &  82.54 \\
+Imp. PM  & 67.79  &  82.76 \\
\hline
\end{tabular}}}
\end{minipage}
\begin{minipage}{0.24\textwidth} 
 \centering 
{
    \setlength{\tabcolsep}{1.5mm}{
  \begin{tabular}{ l|cc}
  \hline
Recons & K=1 & K=5 \\
\shline
w/o &  68.34  &  83.16  \\
vanilla    &  68.59 & 83.25\\
Softmax    &  68.81 & 83.78\\
CGR    &  69.38 & 84.14\\
\hline
\end{tabular}}
}
\end{minipage}
\begin{minipage}{0.24\textwidth} 
 \centering 
{
    \setlength{\tabcolsep}{1.5mm}{
  \begin{tabular}{l|cc}
  \hline
Distill & K=1 & K=5 \\
\shline

vanilla   & 68.61 &  83.46 \\
w/o H  & 68.82  & 83.62 \\
local  & 69.07  & 83.69 \\
global  & 69.38  & 84.14 \\
\hline
\end{tabular}}}
\end{minipage}
\\
\begin{minipage}{0.24\textwidth} 
\centering(a)
\end{minipage}
\begin{minipage}{0.24\textwidth}
\centering(b)
\end{minipage}
\begin{minipage}{0.24\textwidth}
\centering(c)
\end{minipage}
\begin{minipage}{0.24\textwidth}
\centering(d)
\end{minipage}
\vspace{0.1in}
\\
\begin{minipage}{0.24\textwidth} 
\centering 
{
    \setlength{\tabcolsep}{1.5mm}{
  \begin{tabular}{ l|cc}
  \hline
Unsup. & K=1 & K=5 \\
\shline

DC &  40.69 & 54.41  \\
Ours & 41.73 & 55.92\\
\hline
\end{tabular}}}
\end{minipage}
\begin{minipage}{0.24\textwidth} 
\centering 
{
    \setlength{\tabcolsep}{1.5mm}{
  \begin{tabular}{ l|cc}
  \hline
Method & K=1 & K=5 \\
\shline

IFSL & 64.78 & 80.08\\
PatchMix & 68.34 & 83.16 \\
\hline
\end{tabular}}}
\end{minipage}
\begin{minipage}{0.24\textwidth} 
\centering 
{
    \setlength{\tabcolsep}{1.5mm}{
  \begin{tabular}{ l|cc}
  \hline
Strategy & K=1 & K=5 \\
\shline

mix+ori & 68.06 & 82.95\\
all mix & 68.61 & 83.46\\
\hline
\end{tabular}}}
\end{minipage}
\begin{minipage}{0.24\textwidth} 
\centering 
{
    \setlength{\tabcolsep}{1.5mm}{
  \begin{tabular}{ l|cc}
  \hline
Grid size & K=1 & K=5 \\
\shline

$6\times 6$ & 67.95 & 82.69\\
$11\times 11$ & 68.61 & 83.46\\
\hline
\end{tabular}}}
\end{minipage}
\\
\begin{minipage}{0.24\textwidth} 
\centering(e)
\end{minipage}
\begin{minipage}{0.24\textwidth}
\centering(f)
\end{minipage}
\begin{minipage}{0.24\textwidth}
\centering(g)
\end{minipage}
\begin{minipage}{0.24\textwidth}
\centering(h)
\end{minipage}
\\
\vspace{-0.1in}
\caption{\label{tab:ablation} Ablation Studies on \textit{mini}ImageNet 5-way tasks. We show 1-shot(K=1) and 5-shot(K=5) results. (a)
\textbf{PatchMix on our baseline.} We compare PatchMix with other commonly-used data augmentation methods based on our baseline model. (b) \textbf{Plug-in}: We apply our proposed method to CAN~\cite{hou2019cross}. (c) \textbf{CGR}: we compare different instantiations of reconstruction as an auxiliary task for our CGR module. (d) \textbf{Hardness}: we try different implementations of the second stage training, including vanilla distillation, using PatchMix and PatchMix with two kinds of hardness. (e) \textbf{Unsupervise}: we test the proposed substitution for DeepCluster in unsupervised FSL. (f) \textbf{Comparison with IFSL}: we compare our model with IFSL, which also explores causal inference in FSL. (g) \textbf{Mixture strategy}: we try different strategies of using mixed images. (h) \textbf{Pooling}: we compare model trained with and without the last pooling layer in the Res-12 backbone}
 \vspace{-0.2in}
\end{table*}

\begin{table}
 \centering
 {\setlength{\tabcolsep}{2.0mm}{
  \begin{tabular}{ l|cccc}
  \hline
 \multirow{2}{*}{Model} &   \multicolumn{2}{c}{\textit{tiered}ImageNet}&   \multicolumn{2}{c}{CIFAR-FS} \tabularnewline
 \cline{2-5}
&     1-shot & 5-shot&   1-shot & 5-shot\\

   \shline
       w/o    & 72.28 & 86.24 & 76.57 & 88.15\tabularnewline \cline{2-5}
       vanilla    & 72.13 & 86.71 & 76.42 & 88.21\tabularnewline \cline{2-5}
       Softmax    & 72.54 & 86.78 & 77.06 & 87.27\tabularnewline \cline{2-5}
       CGR    & 73.48 & 87.35 & 77.87 & 88.94\tabularnewline 
       
     \shline
     
            vanilla    & 72.56 & 86.39 & 76.97 & 87.83\tabularnewline \cline{2-5}
       w/o H    & 72.78 & 86.67 & 77.02 & 88.10\tabularnewline \cline{2-5}
       local    & 72.81 & 86.81 & 77.30 & 88.52\tabularnewline \cline{2-5}
       global    & 73.48 & 87.35 & 77.87 & 88.94\tabularnewline 
       \hline
  \end{tabular}
 }}
 \vspace{0.1in}
\caption{\label{tab:ablation_tiered} \color{black}{Ablation study of CGR and hardness-aware PatchMix on \textit{tiered}ImageNet and CIFAR-FS.}}
\end{table}

\subsection{Ablation Study}
\label{sec.ablation}

To comprehensively validate the effectiveness of our method, we conduct a series of ablation studies on the design of each sub-module. The accuracies in both 1-shot and 5-shot settings on \textit{mini}ImageNet are reported in Tab.~\ref{tab:ablation}.

\subsubsection{Substitution experiments for PatchMix}
\noindent\textbf{Comparison of PatchMix with other augmentation methods.} Firstly and the most importantly, we testify whether the data augmentation approach in the proposed PatchMix can benefit the few-shot learning. To this end, we compare PatchMix the baseline model that introduced in Sec.~\ref{sec:baseline}, together with three commonly used data augmentation techniques including Mixup, Manifold Mixup and CutMix. For simplicity we omit the performance of baseline and only report the improvement or degeneration of each model compared with baseline. The results in Tab.~\ref{tab:ablation}(a) indicates that the CutMix can only improve the baseline by 0.49\% on 1-shot tasks; while in other settings these methods have no improvements (especially, for manifold Mixup the 5-shot accuracy decreases by 1.39\%). In contrast, our proposed PatchMix can respectively increase by 2.92\% and 3.82\% on 1-shot and 5-shot tasks. Such a noticeable improvement, which can be contributed to the ability of disentangling causal features from others, betokens PatchMix as an effective data augmentation method. 

Moreover, our PatchMix can enjoy a better "neural collapse" property in FSL. Specifically, this property, as observed in \cite{galanti2021role} on FSL,  means that the intra-variance defined as the variance of features from each novel category, can collapse to 0 for a properly trained neural network on sufficient base data. Besides, such novel features form a simplex equiangular tight frame. However, as shown in the experiments in \cite{galanti2021role}, the intra-variance of novel categories can be generally larger than that of the base categories on miniImageNet, which limits the transferability of vanilla training strategy from base categories to the novel ones. To show that our proposed PatchMix can help the FSL models in terms of better neural collapse on novel categories, we visualize the intra-variance of different training methods, including the baseline model where no data augmentation is used as in ~\cite{galanti2021role} and commonly-used augmentation methods like CutMix and Mixup. The results are presented in Fig.~\ref{fig:train_var}(a) and (b).

We observe that the model trained with PatchMix continuously enjoys a larger intra-variance than the baseline model on base categories. The gap is about 17\% in the first 40 epochs and 5\% in the last 10 epochs. As for the novel data, we imitate the testing process to randomly sample 5 novel classes each time for evaluation and calculate the intra-variance with all samples of these classes. Then we repeat this procedure for 20 times and visualize the averaged intra-variance. The result is visualized in Fig.~\ref{fig:train_var}(c). While the intra-variance with PatchMix is larger than that of baseline in the first 7 epochs, it decreases much faster and reaches much smaller value in the end of training. This result, together with the comparison between baseline and PatchMix on CIFAR-FS in Fig.~\ref{fig:var_cifar}, empirically elucidates that our PatchMix can indeed improve the behavior with regard to neural collapse in FSL in terms of generating more collapsed novel features, due to the ability of ours in learning causal features. Specifically, as the novel features are not influenced by the non-causal features which are less correlated to the novel categories, these novel features have decreased variance that is beneficial to separate different categories for classification. 
Beyond the quantitative results, we further visualize learned features of several images from different novel categories that are extracted by models trained with and without the proposed PatchMix, as shown in Fig.~\ref{fig:causal_visual}. We can find that while the model without PatchMix can hardly focus on the target objects, our proposed method can fix this problem, leading to better feature maps.

\begin{figure*}[t]
    \centering
    \includegraphics[scale=0.45]{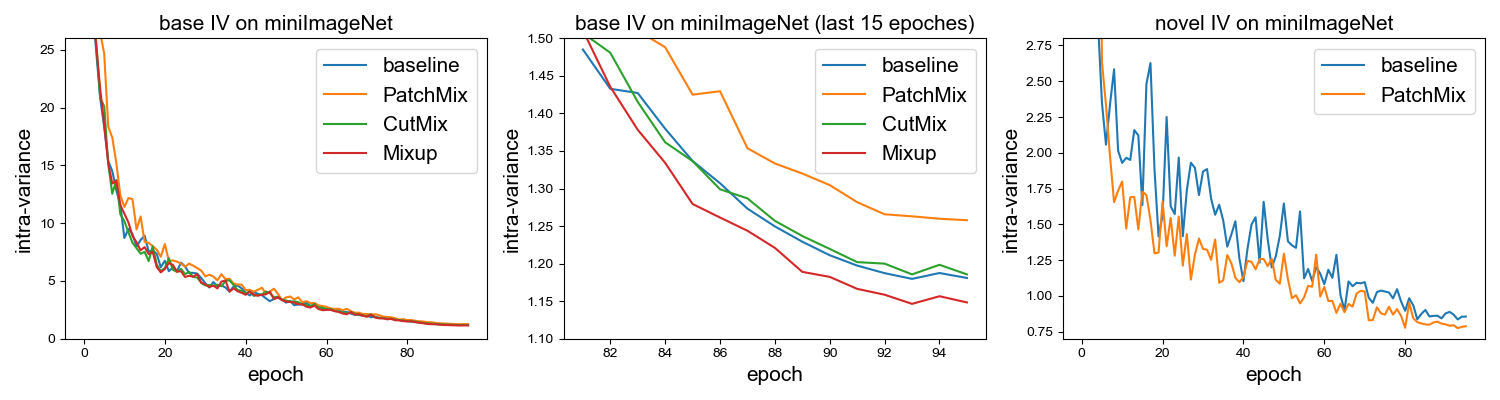}
    \caption{(a) Intra-variance of all 64 base classes on miniImageNet along all epochs when training with 4 different methods. (b) A zoomed version of 80 to 95 epochs of the left image. (c) Comparison of intra-variance of the selected 5 novel classes between baseline and PatchMix. Our PatchMix can not only control a larger IV during training, but also produce better novel class representations with higher accuracy.}
    \label{fig:train_var}
\end{figure*}

\begin{figure}
    \centering
    \includegraphics[scale=0.4]{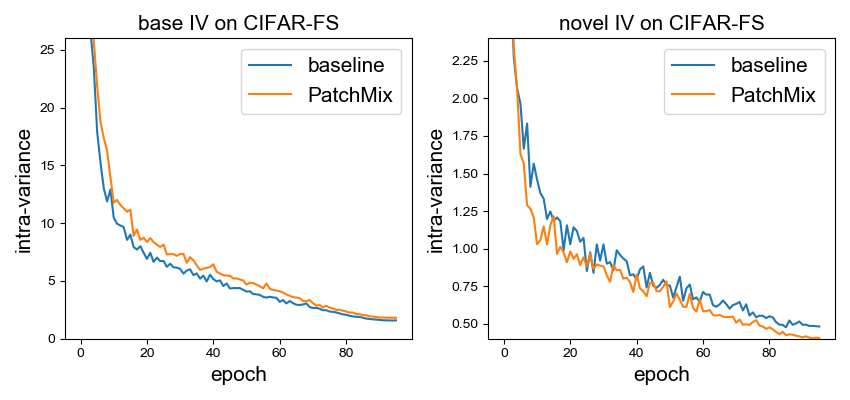}
    \caption{\color{black}{Intra-variance of \textbf{left}: all 64 base classes \textbf{right}: repeatedly sampled 5 novel classes on CIFAR-FS along all epochs when training with and without PatchMix. Similar to the results on \textit{mini}ImageNet, PatchMix leads to both higher base IV and lower novel IV.}}
    \label{fig:var_cifar}
\end{figure}

\begin{figure}
    \centering
    \includegraphics[scale=0.6]{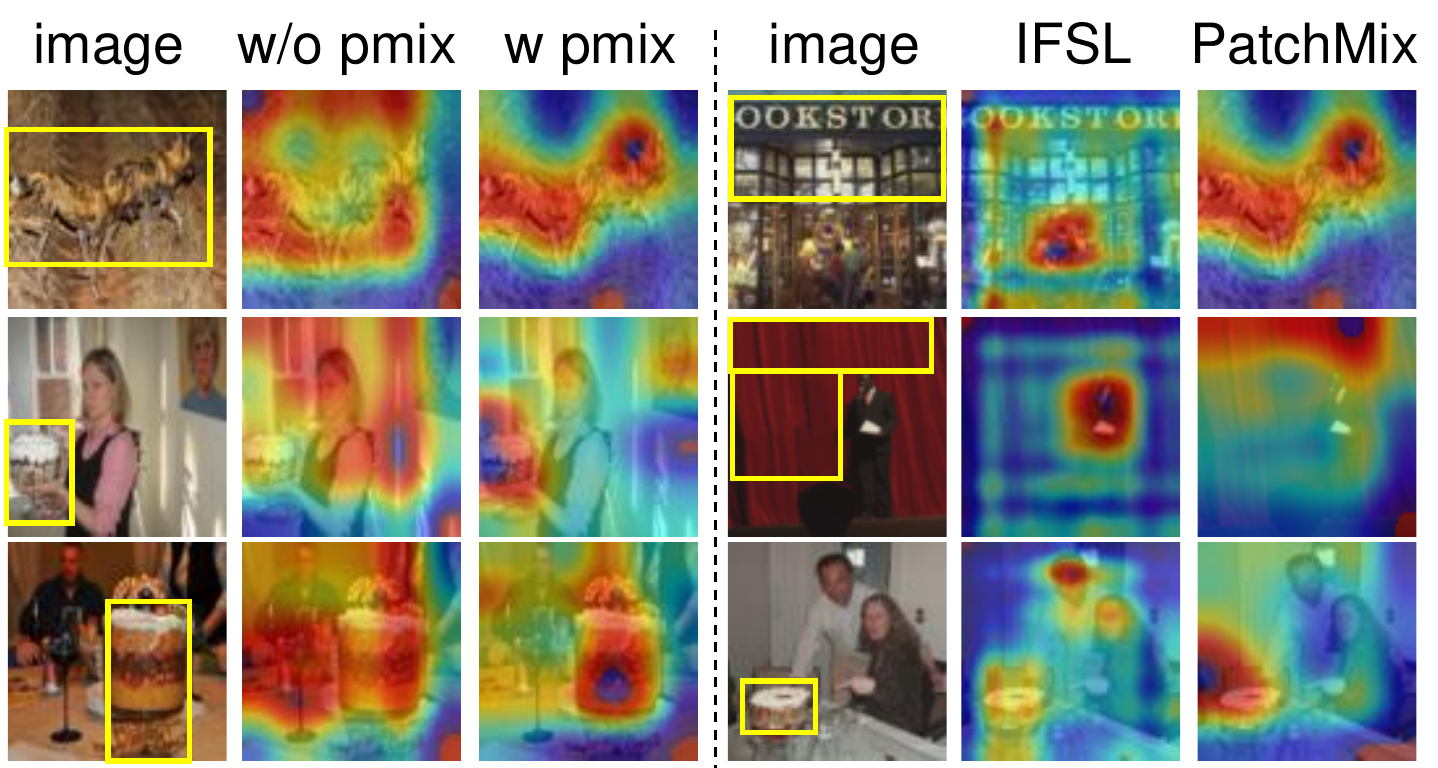}
    \caption{Visualization of features from model trained \textbf{Left}: with or without PatchMix and \textbf{Right}: with IFSL or PatchMix, using images in the novel set of miniImageNet. Objects of interest are highlighted with yellow boxes. {The visualization further illustrates that our PatchMix can disentangle the features to some extent.}}
    \label{fig:causal_visual}
\end{figure}

\noindent\textbf{Applying PatchMix to existing FSL models.} In fact, our proposed PatchMix can be directly applied to any few-shot learning method whose output is composed of a confidence map. To demonstrate the utility of such an application, we employ CAN~\cite{hou2019cross} as a base model and compare the two stages of PatchMix along with CAN against CutMix. Results in Tab.~\ref{tab:ablation}(b) reveal that by applying CutMix to CAN, the 1-shot accuracy is raised but the 5-shot performance is not improved which is consistent with the results on our baseline. In the opposite, utilizing PatchMix and improved PatchMix can boost the accuracy. Concretely, the first stage is better than the basic CAN by 2.72\% and 1.12\% on 1-shot and 5-shot tasks, and the second stage results in a further improvement on both 1-shot and 5-shot tasks. Such results reflect the potential of our method as a plug-in method when solving few-shot learning problems.

\begin{figure}
    \centering
    \includegraphics[scale=0.6]{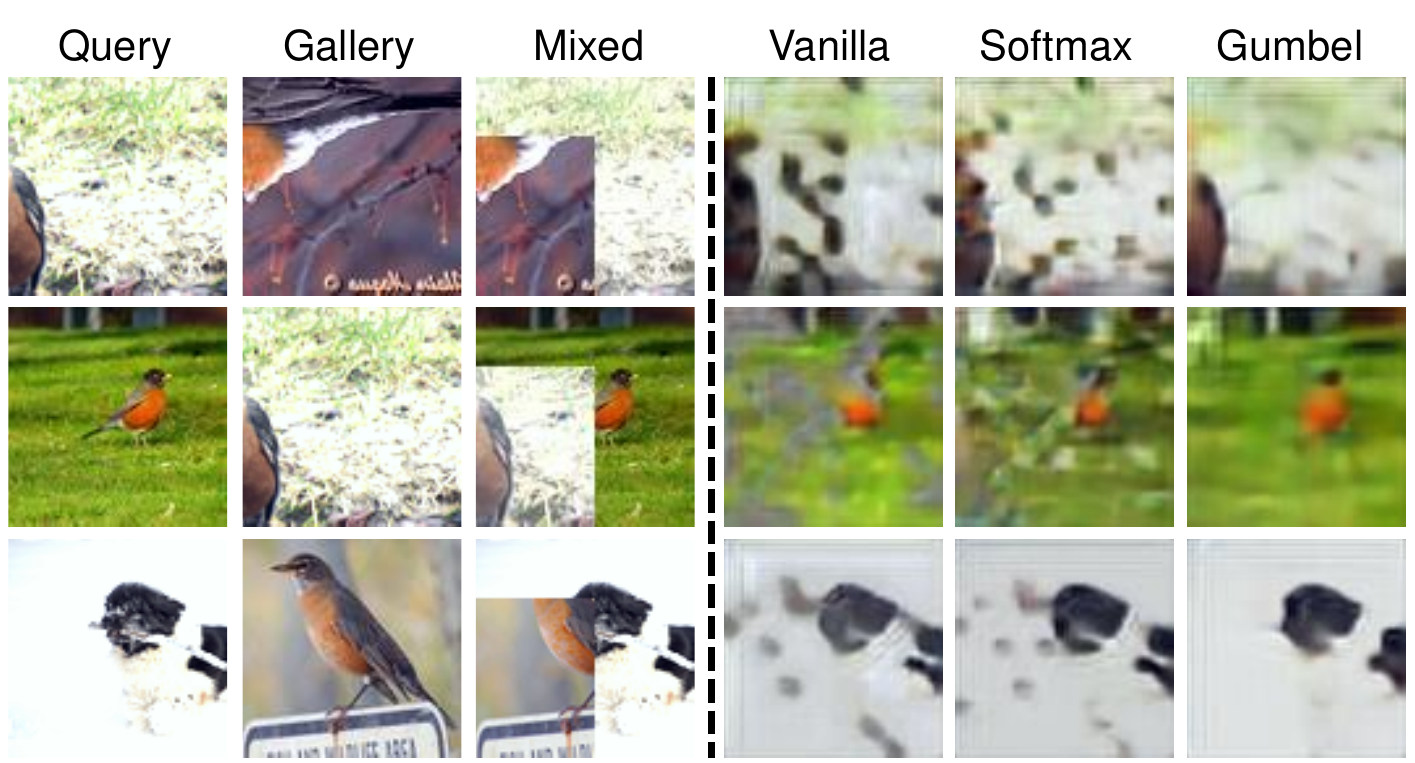}
    \caption{The results of using different methods in our correlation-guided R=reconstruction. We can find that while vanilla reconstruction and the model using softmax produce a lot of artifacts, the images generated by the model using gumbel-softmax as normalization function are more smooth, almost restituting all basic information in the original images.}
    \label{fig:recons}
\end{figure}

\noindent\textbf{Effectiveness of PatchMix in unsupervised representation learning.} As introduced in Sec.~\ref{ufsl}, one of our main adaptation of PatchMix to unsupervised FSL is the substitution of PatchMix to DeepCluster. To show the effectiveness, we train our model with DeepCluster feature instead, whose results are shown in Tab.~\ref{tab:ablation}(e). We find that using DeepCluster the as clustering method with our model is still better than the previous methods, and replacing DeepCluster with PatchMix further improves by 1.04\% and 1.51\% on 1-shot and 5-shot tasks, which reflects the efficacy of two modules in unsupervised setting.

\noindent\textbf{Comparison with IFSL.} In \cite{yue2020interventional} the authors proposed an intervened predictor in the testing pipeline from the perspective of causal inference. To compare the effectiveness, we re-implement this method with Res12 as the backbone and report the comparison result in  Tab.~\ref{tab:ablation}(f). We follow the original paper to perform IFSL based on MTL~\cite{sun2019meta}. As shown, our method can significantly outperform the IFSL on both settings (specifically, respectively improve by 3.10\% and 2.86\% on 1-shot and 5-shot tasks). Besides, this phenomena also holds even without CGR and hardness-aware modules. These results imply the benefit of removing spurious correlation, when the pre-training stage is missing. 

\subsubsection{Ablation study for model variants among different design choices}
\noindent\textbf{Variants for hardness-aware PatchMix.} To illustrate the role of our proposed hardness-aware PatchMix, we compare several variants including vanilla knowledge distillation, using PatchMix without hardness, using local hardness and using global hardness. The results are shown in Tab.~\ref{tab:ablation}(d) and Tab.~\ref{tab:ablation_tiered}. We can find that while using vanilla knowledge distillation can help boost the performance, the improvement {is} relatively small. Moreover, PatchMix without hardness cannot bring {enhancement} on 5-shot tasks. This may {attribute to} that the knowledge imposed by PatchMix has been learned in the teacher model, which is transferred to the student model via distillation. However with more hard examples in the training stage by mixing images from similar classes, we can further promote the distillation process. We suspect that the gap is attributed to the learned discriminative features for classification. 

\noindent\textbf{Variants for correlation-guided reconstruction.} We test the model with and without the proposed CGR. Specifically, the variants include model without reconstruction, model with vanilla reconstruction that query feature map and gallery feature map are directly concatenated and used as the input of the decoder, model with correlation-guided reconstruction where softmax is used as normalization function and our final CGR model where gumbel-softmax is used. As can be seen in Tab.~\ref{tab:ablation}(c) and Tab.~\ref{tab:ablation_tiered}, (1) utilizing vanilla model cannot improve the base model. This means that a simple recovery without explicit modelling of the patch selection cannot help the model learn better representations, which is consistent with the discussion in Sec.~\ref{sec:cgr}. (2) Using softmax as normalization function can bring smaller improvement and sometimes even degeneration. In fact, the reconstruction results of this method is obviously worse than the other two variants, as shown in Fig.~\ref{fig:recons}. We note that this method is the only one that cannot restore the basic shape and colors from the input knowledge. The possible reason is that softmax function makes weight $\alpha_{i,j}\in(0,1)$, which means that it will introduce information from both patches in each position even if one of them is uncorrelated to the target image, which may confuse the decoder. Thus even if the model can learn correct similarity between patches from different images, a bad supervision on the reconstruction hinders the training. (3) Our final choice of gumbel-softmax benefits the model with 0.53\% and 0.52\% higher accuracies on 1-shot and 5-shot tasks on \textit{mini}ImageNet, {\color{black}{and consistent improvement on CIFAR-FS and \textit{tiered}ImageNet}}. This method can provide both better restoration and more precise classification compared to the former two variants, which justifies the efficacy of CGR and the necessity of using gumbel-softmax. The discrete distribution generated by gumbel-softmax does not change the value of original feature, but reorganizing them between two feature maps {instead}. Consequently, a good reconstruction is solely conditioned on a correct selection of patches. 

\noindent\textbf{Mixture strategy.} In Tab.~\ref{tab:ablation}(g) we compare two different strategies of using PatchMix, \emph{i.e.}, using only mixed images or {using} them together with original images. The results demonstrates that abandoning the images before mixture is better by 0.55\% and 0.51\% on 1-shot and 5-shot tasks, and mixing two types of images leads to similar performance of that of the baseline model. One reason may be that such a mixing fails to disentangle causal and non-causal features, in the same way as CutMix as analyzed in Sec.~\ref{sec:anaylysis}.

\noindent\textbf{Does size of feature map affect the performance?} One would ask if it is necessary to modify the {ResNet-12}. We thus compare the model with and without the last max pooling layer in Tab.~\ref{tab:ablation}(h). The results show that deleting the pooling can bring an improvement of 0.74\% on 1-shot and 0.77\% on 5-shot. We argue that the reason is two fold. First, without the max pooling, we can avoid the information loss to some extent, thus comparing the support and query feature in a more detailed way. Second, while larger feature maps may lead to some confusing supervision that some patches do not contain the target object but is guided by the corresponding labels, our PatchMix can alleviate this problem by imposing random information from other categories into these patches, thus making the supervision more robust.

\section{Conclusion}
In this paper we analyze the necessity of learning disentangled causal features, in order to remove the sample selection bias that is commonly met in FSL. To solve these problems, we propose PatchMix to switch the patches and corresponding supervision, which has been theoretically shown to learn causal features by removing the spurious dependency between causal and non causal features across patches. Additionally, we propose two extra modules to enhance PatchMix with more discriminative features. Besides, we present an adaptation of our model for unsupervised FSL. Experiment results among three different settings reveal the efficacy of our proposed method.

{\small
\bibliographystyle{IEEEtran}
\bibliography{egbib}
}

\begin{IEEEbiography}[{\includegraphics[width=1in,
height=1.25in,clip, keepaspectratio]{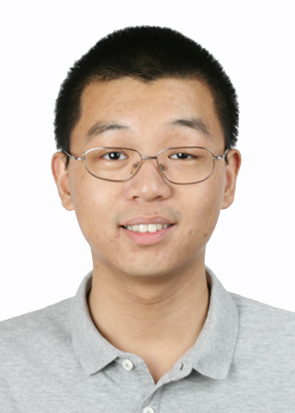}}]{Chengming Xu} 
received his Bachelor's degree in computer science at Fudan University in 2018 and is now a fourth year PhD student majoring in statistics advised by Prof. Yanwei Fu. His research interests include action analysis and few-shot learning.
\end{IEEEbiography}

\begin{IEEEbiography}[{\includegraphics[width=1in,height=1.25in,clip,keepaspectratio]{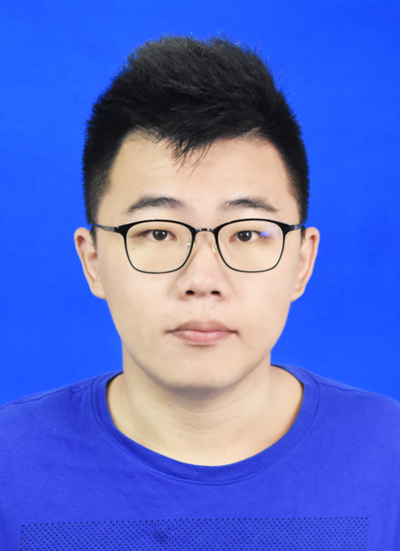}}]{Chen Liu}
is a PhD student at the Department of Mathematics at the Hong Kong University of Science and Technology under the supervision of Prof. Yuan Yao. 
He received the Bachelor degree of Engineering from the School of Mechanical Engineering, Shanghai Jiaotong University, in 2018 and the Master degree of Statistics from the School of Data Science, Fudan University, in 2021.
His current research interests include machine learning and its application to computer vision.
\end{IEEEbiography}

\begin{IEEEbiography}[{\includegraphics[width=1in,height=1.25in,clip,keepaspectratio]{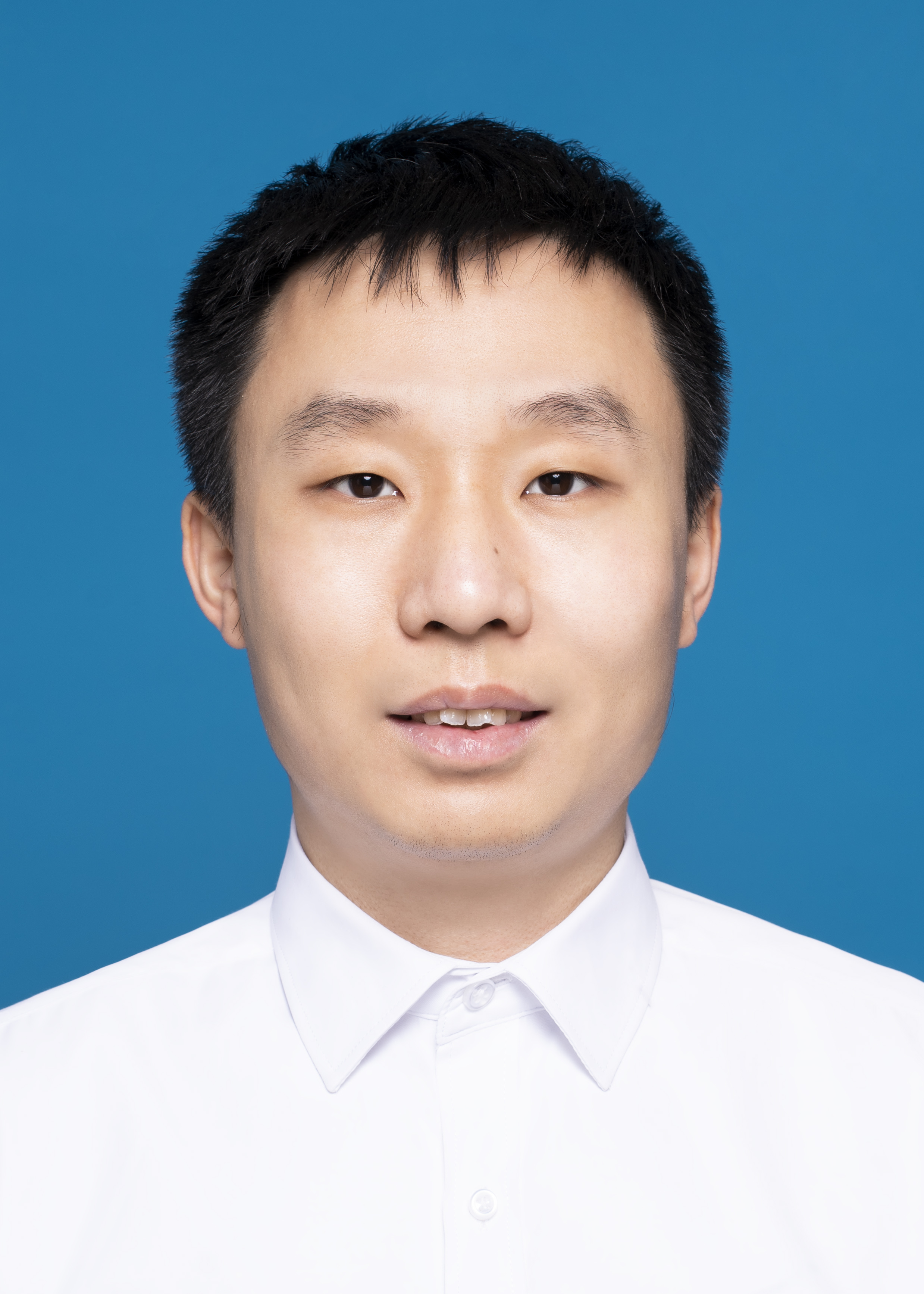}}]{Xinwei Sun}
is currently an assistant professor with School of Data Science, Fudan University. He received his Ph.D in school of mathematical sciences, Peking University in 2018. His research interests mainly focus on statistical machine learning, causal inference, with their applications on medical imaging, computer vision and few-shot learning. 
\end{IEEEbiography}

\begin{IEEEbiography}[{\includegraphics[width=1in,height=1.25in,clip,keepaspectratio]{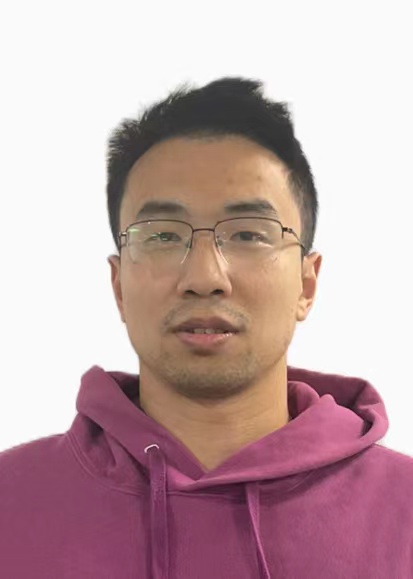}}]{Siqian Yang} recevied his Ph.D. on Computer Science and Technology from Tongji University in 2018. He is currently a researcher at Tencent YouTu Lab, China.
His research interests include image processing, computer vision, and vehicular networks. 
\end{IEEEbiography}

\begin{IEEEbiography}[{\includegraphics[width=1in,height=1.25in,clip,keepaspectratio]{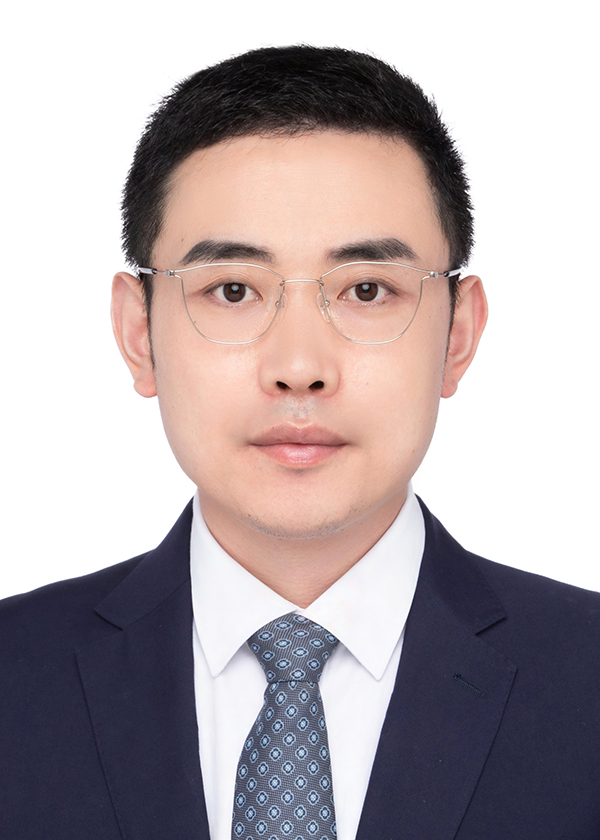}}]{Yabiao Wang} is a Senior Researcher at Tencent Youtu
lab,China. He received his master degree from
Zhejiang University in 2016. He published more
than 30 conference papers including CVPR,
ICCV,ECCV, and AAAI. He won more than 20
challenge titles. His research interests are object
detection/segmentation, few-shot learning and domain
adaptation/generalization. 
\end{IEEEbiography}

\begin{IEEEbiography}[{\includegraphics[width=1in,height=1.25in,clip,keepaspectratio]{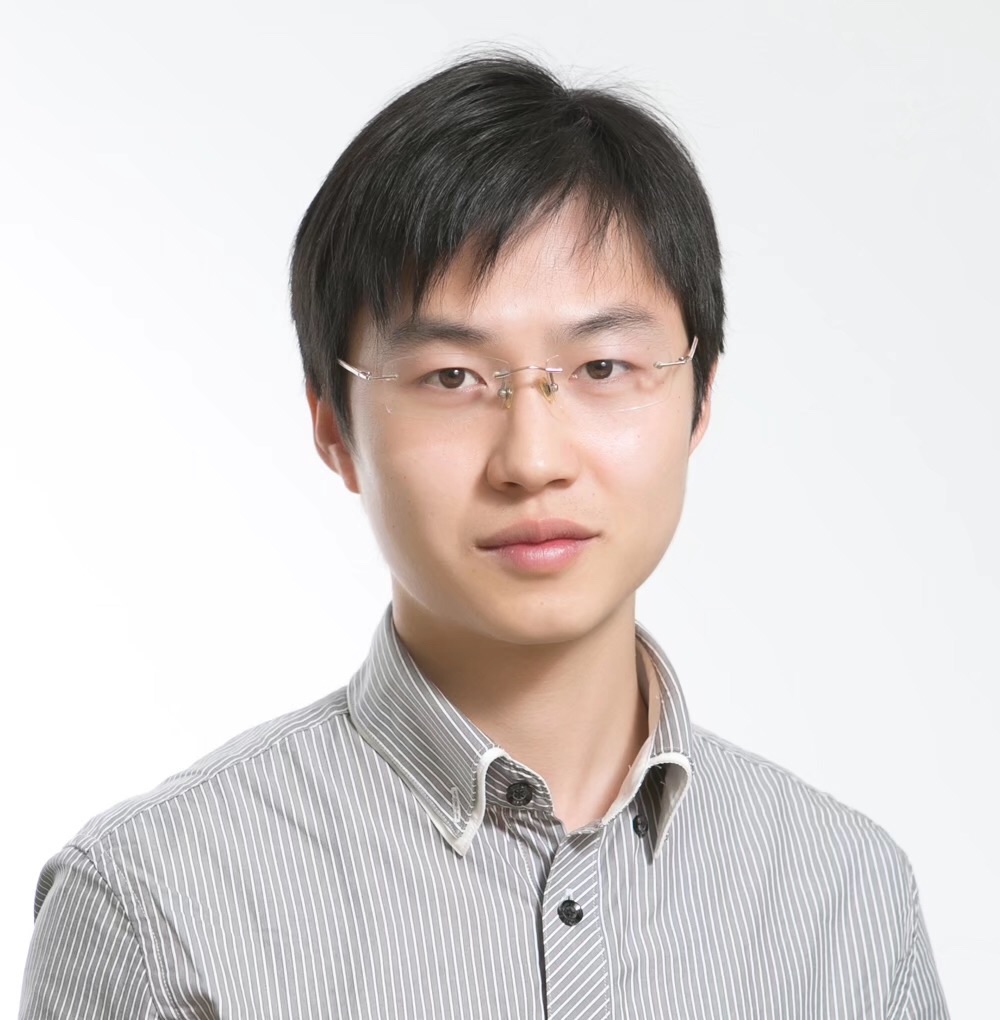}}]{Chengjie Wang} received the B.S degree in computer science from Shanghai Jiao Tong University, China, in 2011, and double M.S. degrees in computer science from Shanghai Jiao Tong University, China and Waseda Univesity, Japan, in 2014. He is currently the Research Director of Tencent YouTu Lab. His research interests include computer vison and machine learning. He has published more than 70 refereed papers on major Computer Vision and Artificial Intelligence Conference and holds over 120 patents in these areas.
\end{IEEEbiography}

\begin{IEEEbiography}[{\includegraphics[width=1in,height=1.25in,clip,keepaspectratio]{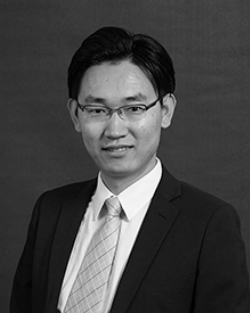}}]{Yanwei Fu} received hist PhD degree from the Queen Mary University of London, in 2014. He worked as  post-doctoral research at Disney Research, Pittsburgh, PA, from 2015 to 2016. He is currently a professor with Fudan University.  
He was appointed as the Professor of Special Appointment (Eastern Scholar) at Shanghai Institutions of Higher Learning in 2017, and   awarded the 1000 Young talent scholar in 2018.
He published more than 100 journal/conference papers including IEEE TPAMI, TMM, ECCV, and CVPR. His research interests are one-shot learning for images and videos, learning based 3D reconstruction for modelling objects/bodies, robotic grasping, and image generation/inpainting/editing. 
\end{IEEEbiography}
\end{document}